\documentclass{article}

\PassOptionsToPackage{numbers, compress}{natbib}
\usepackage[preprint]{neurips_2026}

\usepackage[utf8]{inputenc} 
\usepackage[T1]{fontenc}    
\usepackage{hyperref}       
\usepackage{url}            
\usepackage{booktabs}       
\usepackage{amsfonts}       
\usepackage{nicefrac}       
\usepackage{microtype}      
\usepackage{xcolor}         
\usepackage{graphicx}
\usepackage{subcaption}
\usepackage{makecell}
\usepackage{bm}
\usepackage{booktabs,multirow,xcolor}
\usepackage{amsmath}
\usepackage{wrapfig}
\usepackage{amsthm}
\usepackage{bbm}
\usepackage{hyperref}

\newtheorem{theorem}{Theorem}

\newtheorem{remark}{Remark}

\title{CHE-TKG: Collaborative Historical Evidence \\ and Evolutionary Dynamics Learning \\ for Temporal Knowledge Graph Reasoning}

\author{%
  Shuai-long Lei, Xiaobin Zhu\thanks{Corresponding Author}, Jiarui Liang, Guoxi Sun, Zhiyu Fang, Xu-Cheng Yin\\
  University of Science and Technology Beijing \\
  \texttt{shuailong0lei@gmail.com,zhuxiaobin@ustb.edu.cn,m202510664@xs.ustb.edu.cn}\\
  \texttt{sgxxyyds@foxmail.com,mr.fangzy@foxmail.com,xuchengyin@ustb.edu.cn} \\
}

\begin{document}

\maketitle

\vspace{-10pt}
\begin{abstract}
Temporal knowledge graph (TKG) reasoning aims to predict future events from historical facts.
A key challenge lies in jointly capturing two sources of predictive information in TKGs: historical evidence and evolutionary dynamics.
However, existing methods typically focus on only one of these sources, which limits the ability to fully exploit the complementary predictive signals in TKGs.
To address this, we propose CHE-TKG, a novel collaborative dual-view learning framework for TKG reasoning. 
CHE-TKG explicitly separates and jointly models historical evidence and evolutionary dynamics, aiming to learn and exploit their complementary predictive signals.
Specifically, CHE-TKG constructs a historical evidence graph to capture long-term structural regularities and stable relational constraints, alongside an evolutionary dynamics graph to model temporal transitions and recent changes, with dedicated encoders for each view.
We further employ relation decomposition and a contrastive alignment objective to better capture the predictive signals across the two views.
Extensive experiments demonstrate that CHE-TKG achieves state-of-the-art performance on multiple benchmarks.

\end{abstract}

\section{Introduction}
\vspace{-2pt}
Temporal knowledge graphs (TKGs) model real-world facts with temporal information, where each fact is represented as a quadruple $(subject, relation, object, timestamp)$ and organized as a sequence of time-ordered snapshots.
This temporal modeling supports a wide range of downstream applications, such as recommendation \cite{tang2024editkg,wang2023mixed}, question answering \cite{atif2023beamqa,liu2022joint}, and information retrieval \cite{xu2024retrieval,ding2024enhancing}.
Due to the inherent incompleteness of TKGs \cite{zhang2025historically,du2025hawkes,chen2024local,liang2023learn}, temporal knowledge graph reasoning (TKGR) aims to infer missing facts at specific timestamps. 

TKGs often exhibit two distinct sources of predictive information \cite{liu2025terdy, lacroixtensor, chenbeyond}, namely historical evidence (E) and evolutionary dynamics (D), which may provide complementary predictive signals for TKGR.
Complementary predictive signals refer to different yet useful predictive cues captured from these two sources.
Recent TKGR methods attempt to leverage structural inductive biases to capture these two aspects, as shown in Fig.~\ref{fig:problem}(b).
One line of work \cite{li2022tirgn, chen2024local, zhang2024temporal, zhang2025historically} focuses on historical evidence, where historically observed relevant events are aggregated while discarding temporal information. Another line of research \cite{du2025hawkes, chen2024htccn} emphasizes evolutionary dynamics, typically leveraging various mechanisms to model temporally ordered events, thereby capturing time-evolving dependencies.
However, these methods typically focus on only one of these sources, which limits the ability to fully exploit the complementary predictive signals in TKGs.

{
  \setlength{\belowcaptionskip}{-10pt}
\begin{figure}[t]
  \centering
  \includegraphics[width=\columnwidth]{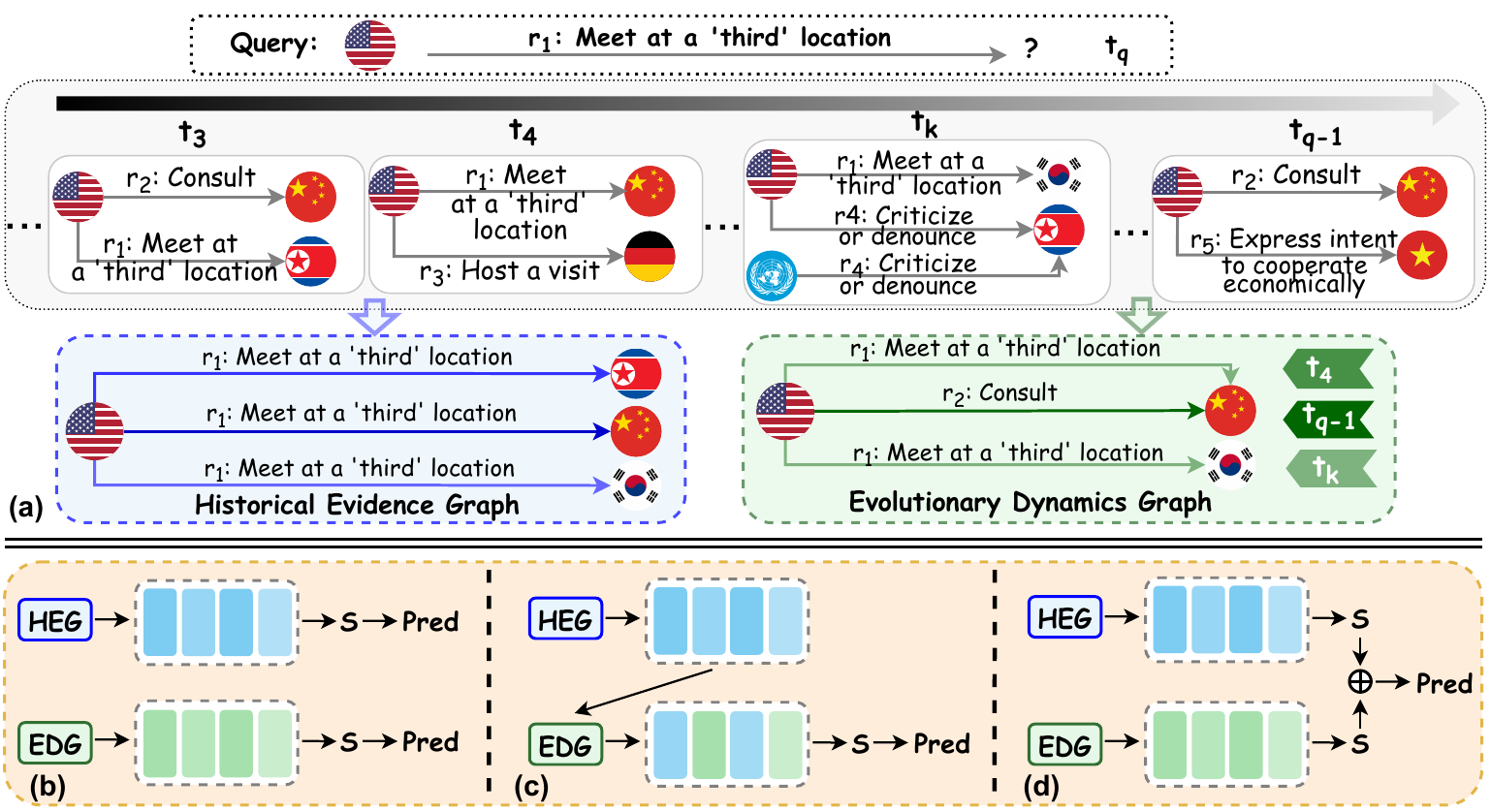}
  \caption{An illustrative example of the two views, including graph construction on the ICEWS dataset and comparisons of different reasoning paradigms.
  (a) Construction of a historical evidence graph and an evolutionary dynamics graph from the observed TKG given a query.
  (b) Single-view reasoning paradigm. After obtaining entity and relation embeddings, prediction is performed by computing scores.
  (c) Naive joint modeling of both views.
  (d) Our CHE-TKG with explicit separation and joint learning of two views.
  }
  \label{fig:problem}
\end{figure}
}

To better understand these two aspects, we instantiate them as two graph structures in a TKG: a historical evidence graph and an evolutionary dynamics graph.
Fig.~\ref{fig:problem}(a) illustrates this construction using an example from the ICEWS dataset.
Given the query $q=$ (\textit{North America}, \textit{Meet at a `third' location}, ?, $t_q$), historical facts of the form (\textit{North America}, \textit{Meet at a `third' location}, $e_o$, t) with $t < t_q$ reflect accumulated historical evidence.
Removing timestamps and aggregating these facts corresponds to a historical evidence graph (blue dashed region), which captures long-term structural regularities and stable relational constraints.
In parallel, logically related recent events, derived from temporal logical rules \cite{liu2022tlogic}, capture the evolutionary dynamics, which can be organized as an evolutionary dynamics graph (green dashed region) to capture query-relevant temporal transitions. 
Although naively combining these two structures enables joint learning, such integration still poses a significant challenge, as shown in Fig.~\ref{fig:problem}(c). In particular, simple joint modeling may suffer from shortcut learning, where the model tends to favor easier-to-learn patterns while failing to preserve informative signals \cite{pezeshki2021gradient, scimecashortcut}. 
As a result, complementary predictive signals may be under-utilized or collapsed in the learned representations, preventing them from being fully exploited.

To address the aforementioned challenges, we propose CHE-TKG, a novel collaborative dual-view learning framework for TKGR.
CHE-TKG explicitly separates and jointly models historical evidence and evolutionary dynamics through two dedicated structural views, thereby better learning and exploiting complementary predictive signals, as shown in Fig.~\ref{fig:problem}(d).
Specifically, we construct a historical evidence graph and an evolutionary dynamics graph, and employ two dedicated encoders to learn view-specific representations.
The historical evidence graph captures long-term structural regularities and stable relational constraints, while the evolutionary dynamics graph models temporal transitions and recent changes.
We also incorporate relation decomposition to learn view-specific relation semantics, and introduce contrastive alignment to encourage the two views to capture task-relevant signals.
Our main contributions are summarized as follows:
\begin{itemize}
  \item We propose a novel collaborative dual-view learning framework for TKGR (dubbed CHE-TKG) to better learn and exploit complementary predictive signals from historical evidence and evolutionary dynamics.
  
  \item We construct historical evidence and evolutionary dynamics graphs with dedicated encoders, and introduce relation decomposition and contrastive alignment to learn view-specific semantics and task-relevant cross-view signals.

  \item We theoretically show that the collaborative dual-view learning framework can improve ranking reliability by reducing pair-level ranking errors under mild conditions.

  \item Extensive experiments on multiple benchmark datasets verify that CHE-TKG consistently achieves state-of-the-art performance in extrapolation settings.
\end{itemize}
  
\section{Related Work}

Since TKGs are inherently graph-structured, many extrapolation reasoning methods adopt graph neural networks (GNNs) to learn entity and relation embeddings.
RE-NET \cite{jin2020recurrent} and RE-GCN \cite{li2021temporal} pioneered the paradigm by modeling spatial structures with GNNs and capturing local temporal context across snapshots using recurrent neural networks (RNNs). 
Building upon this paradigm, subsequent works explore two main directions. Some approaches focus on relation-centric modeling to improve entity representations \cite{chen2021dacha,zhang2023learning,liu2023retia,liang2023learn}, while others \cite{li2022hismatch,li2022complex,sun2022graph} strengthen temporal modeling across recent snapshots through various designs. 
However, these methods do not explicitly instantiate historical evidence or evolutionary dynamics as dedicated structural views.

Recent approaches attempt to explicitly model either historical evidence or evolutionary dynamics by introducing extra graph structures. Specifically, HGLS \cite{zhang2023learning_2} builds a global graph over a large temporal window, while DyMemR \cite{zhang2024temporal} maintains a memory pool to selectively store global historical information. TiRGN \cite{li2022tirgn} introduces a global historical replay matrix, whereas LogCL \cite{chen2024local} and HisRES \cite{zhang2025historically} model global two-hop and one-hop historical dependencies, respectively.
These approaches primarily leverage query-related, time-agnostic global structures to model historical evidence, improving the utilization of long-term contextual information. Meanwhile, another line of work emphasizes temporal evolution through dynamic structures. For instance, HERLN \cite{du2025hawkes} combines Hawkes processes with structural properties such as community and long-tail distributions, and HTCCN \cite{chen2024htccn} adopts causal convolutions and Hawkes processes to capture temporal dependencies, temporal decay, and event intensities.
However, existing approaches do not unify the modeling of historical evidence and evolutionary dynamics, leaving a gap in jointly modeling and leveraging the complementary predictive signals from the two sources.
Further details on other TKGR methods can be found in Appendix~\ref{appe:related}.

\section{Methodology}
\subsection{Preliminaries}
\textbf{Task Definition.}
A temporal knowledge graph (TKG) is defined as $(\mathcal{E}, \mathcal{R}, \mathcal{T}, \mathcal{F})$, where $\mathcal{E}$, $\mathcal{R}$, $\mathcal{T}$, and $\mathcal{F}$ denote the sets of entities, relations, timestamps, and temporal facts, respectively. 
Each fact $f \in \mathcal{F}$ is a quadruple $(e_s, r, e_o, t)$ with $e_s, e_o \in \mathcal{E}$, $r \in \mathcal{R}$, and $t \in \mathcal{T}$. 
Facts are organized into time-ordered snapshots $\mathcal{G}=\{\mathcal{G}_1, \mathcal{G}_2, \dots\}$, where $\mathcal{G}_t=\{(e_s, r, e_o, t)\mid (e_s, r, e_o, t)\in\mathcal{F}\}$.
Extrapolation reasoning aims to infer facts at unseen future timestamps and is formulated as a link prediction task that answers queries $q=(e_s, r, ?, t_q)$. During training, only historical facts with $t_i < t_q$ are observable, while facts at timestamp $t_q$ and beyond remain unobserved.
We denote by $\mathcal{Q}_{t_q}$ the set of such queries at timestamp $t_q$.
Following prior work, inverse relations are introduced by adding edges $(e_o, r^{-1}, e_s, t)$, allowing subject prediction queries to be transformed into tail prediction queries. 
For simplicity, we describe the task in terms of tail entity prediction in the subsequent analysis. 
A summary of important notations is provided in Appendix~\ref{abla:nota}.

\textbf{Temporal Logical Rule.}
A temporal logical rule $\mathcal{TR}$ is defined as
$
(A, r_h, B, t_2) \leftarrow (A, r_b, B, t_1), \quad t_2 > t_1,
$
indicating that the occurrence of relation $r_b$ between entities $A$ and $B$ at time $t_1$ supports the occurrence of relation $r_h$ between the same entities at a later time $t_2$. The confidence of a rule is measured by how often the rule
body is followed by the rule head.
\subsection{Framework}
{
  \setlength{\belowcaptionskip}{-10pt}
\begin{figure*}[ht!]
  \centering
  \includegraphics[width=\textwidth]{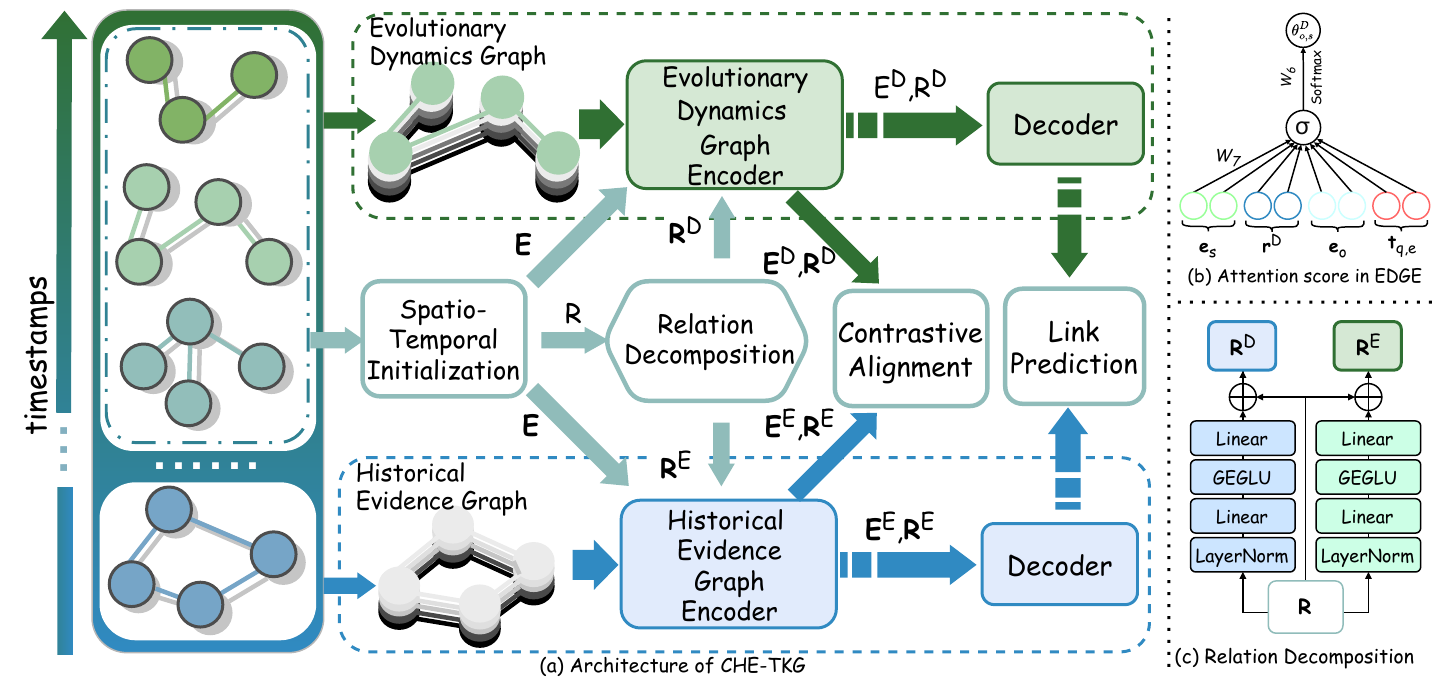}
  \vspace{-16pt}
  \caption{Overall architecture of CHE-TKG (a). The framework consists of
  spatio-temporal initialization, relation decomposition (c),
  a historical evidence graph encoder, and an evolutionary dynamics graph encoder
  with time-aware attention scores (b), followed by contrastive
  alignment.
  }
  \label{fig:pipeline}
\end{figure*}
}

We propose a collaborative dual-view learning framework CHE-TKG for TKGR, as illustrated in Fig.~\ref{fig:pipeline}. 
CHE-TKG constructs a historical evidence graph and an evolutionary dynamics graph from observed facts to model two complementary sources of predictive information. 
The historical evidence graph captures long-term structural regularities and stable relational patterns, while the evolutionary dynamics graph captures recent temporal transitions and time-sensitive changes.
Facts from recent snapshots are incorporated to initialize spatio-temporal entity and relation representations, providing general contextual information for subsequent collaborative reasoning. 
After initialization, relation representations are decomposed into view-specific components, enabling a relation to carry different semantics in the historical and evolutionary views. 
Then, the Historical Evidence Graph Encoder (HEGE) and Evolutionary Dynamics Graph Encoder (EDGE) encode the decomposed relations and entity embeddings to learn view-specific structural signals from the two graphs. 
A contrastive alignment mechanism further aligns query representations across views to promote task-relevant complementarity. 
Finally, the dual-view representations are combined in the decoder for entity prediction, allowing CHE-TKG to exploit complementary predictive signals from both views.

\subsection{Collaborative Graph Construction}

\textbf{Historical Evidence Graph.} We construct the historical evidence graph by aggregating historical events while discarding explicit temporal information. 
Specifically, given a query $q=(e_s, r, ?, t_q)$, we retrieve all historical facts
$f^{E}_{t_q}=(e_s, r, e_o, t)$ with $t<t_q$ and incorporate them into the corresponding historical evidence subgraph $G^{E}_{t_q}\in\mathcal{G}^{E}$. 
As $t_q$ increases, the historical evidence graph and the evolutionary dynamics graph are constructed from an expanded set of historical facts observed prior to $t_q$.
For each timestamp $t_q$, all facts associated with its queries are aggregated into the same historical evidence and evolutionary dynamics subgraphs, denoted as $G^{E}_{t_q}$ and $G^{D}_{t_q}$.

\textbf{Evolutionary Dynamics Graph.}
We construct the evolutionary dynamics subgraph $G^{D}_{t_q}$ using a temporal logical rule--based retrieval strategy.
Specifically, we first mine temporal logical rules using TLogic \cite{liu2022tlogic} and sort them by confidence in descending order.
Given a query $q=(e_s,r,?,t_q)$, the query relation $r$ is treated as the head relation $r_h$, and rules with head relation $r_h$ are selected accordingly.
For each selected rule with body relation $r_b$, we retrieve historical facts of the form $f^{D}_{t_q}=(e_s, r_b, e_b, t)$ with $t<t_q$ in reverse chronological order, where $e_b$ denotes an entity connected to $e_s$ via the rule body relation.
These facts are then incorporated into $G^{D}_{t_q}$ until a predefined upper bound $N$ is reached.
For each edge, we explicitly encode the relative time interval $\Delta t=t_q-t$ as edge information, allowing the model to capture fine-grained temporal evolution patterns. 
Additional analysis of rule-based retrieval is provided in Appendix~\ref{abla:rule}.

\subsection{Initialization}

The initialization stage provides spatio-temporal representations by leveraging temporal and structural information from recent snapshots. 
Given the snapshot at time $t-1$, we employ a graph convolutional network (GCN) to aggregate multi-relational neighborhood information within each snapshot:
\begin{equation}
\mathbf{e}_{o,t-1}^{\,l+1}
=
\sigma(
\sum_{(e_s,r,e_o)\in\mathcal{G}_{t-1}}
\frac{1}{c_o}
\, W_1^{\,l}
\kappa(\mathbf{e}_{s,t-1}^{\,l}, \mathbf{r}_{t-1}^{\,l})
+
W_2^{\,l} \mathbf{e}_{o,t-1}^{\,l}
),
\end{equation}
where $\mathbf{e}_{s,t-1}^{\,l}$ and $\mathbf{e}_{o,t-1}^{\,l}$ are entity embeddings at time $t\!-\!1$ in the $l$-th layer, $W_1^{\,l}$ and $W_2^{\,l}$ are learnable parameters, $c_o$ denotes the in-degree normalization factor, and $\sigma(\cdot)$ is the RReLU activation function. 
Following \cite{zhang2025historically}, relation embeddings are updated jointly in the GCN, and $\kappa(\cdot)$ denotes a one-dimensional convolution operator as in \cite{li2022tirgn}.
After modeling snapshot-level spatial structures with GCNs, we employ Gated Recurrent Units (GRUs) to capture temporal dependencies across snapshots. Specifically, the entity and relation representations are updated as:
\begin{equation}
\mathbf{E}_t = \mathrm{GRU}_E(\mathbf{E}_{t-1}, \mathbf{E}_{t-1}^{\mathrm{GCN}}), \quad
\mathbf{R}_t = \mathrm{GRU}_R(\mathbf{R}_{t-1}, \mathbf{R}_t').
\end{equation}
where $\mathbf{E}_t$ and $\mathbf{E}_{t-1}$ denote the entity embedding matrices at time steps $t$ and $t-1$, $\mathbf{E}_{t-1}^{\mathrm{GCN}}$ is the GCN output at time $t-1$, and $\mathbf{R}_t'$ denotes the intermediate relation representation matrix. For each relation $r$, its intermediate representation is constructed as
$
\mathbf{r}_t' = [\, \mathrm{pool}(\mathbf{E}_{r,t}) \,\Vert\, \mathbf{r} \,],
$
where $\mathbf{E}_{r,t}$ denotes the embeddings of entities associated with relation $r$ at time $t$, and $\mathrm{pool}(\cdot)$ denotes mean pooling.

\subsection{Relation Decomposition}
To learn heterogeneous relational semantics and preserve view-specific predictive signals across the two graph views, we introduce a relation decomposition mechanism that learns view-specific relation representations. Formally, given a base relation embedding $r$, we derive the view-specific relation embedding for each view $v \in \{E,D\}$ via multiplicative modulation:
\begin{equation}
  \hat{r}^{v}
  =
  \bigl( \mathbf{1} + g^{v}(r) \bigr) \odot r,
  \qquad
  g^{v}(r)
  =
  W^{v}_3
  \,
  \mathrm{Drop}\!\left(
  \mathrm{GEGLU}(\mathrm{LN}(r))
  \right),
  \quad v \in \{E,D\}.
\label{eq:relation_decomposition}
\end{equation}
where $\hat{r}^{E}$ and $\hat{r}^{D}$ are used in the historical evidence graph and the evolutionary dynamics graph, respectively, $\mathbf{1}$ denotes an all-ones vector, and $\odot$ denotes element-wise multiplication. Here, $W^{v}_3$ is a view-specific learnable projection matrix, $\mathrm{LN}(\cdot)$ denotes layer normalization, $\mathrm{GEGLU}(\cdot)$ \cite{shazeer2020glu} applies a gated linear unit with GELU activation, and $\mathrm{Drop}(\cdot)$ denotes dropout.

\subsection{Graph Encoder}
To capture event correlations in both graphs, we adopt a Graph Attention Network (GAT) architecture with two view-specific encoders: the Historical Evidence Graph Encoder (HEGE) and the Evolutionary Dynamics Graph Encoder (EDGE). 
Entity and relation representations are initialized with spatio-temporal and decomposed embeddings, respectively.
To model temporal relevance, we encode the relative time interval between the query
$q=(e_s,r,?,t_q)$ and a historical fact $(e_s,r,e_o,t_e)$ as:
$
t_{q,e}=\Phi(t_q-t_e),
$
where $\Phi(\cdot)$ follows TGAT \cite{xu2020inductive}:
$
\Phi(t)=\sqrt{\tfrac{1}{d}}\,[\cos(w_1 t+p_1),\ldots,\cos(w_d t+p_d)] .
$
The attention coefficients in the EDGE and HEGE are then defined as:
\begin{equation}
  \theta^{D,l}_{o,s}
  =
  \frac{
  \exp\!\big(
  W^{l}_{4}\,
  \sigma\!\big(
  W^{l}_{5}\,
  [\,\mathbf{e}^{l}_{s} \Vert \hat{\mathbf{r}}^{D} \Vert \mathbf{e}^{l}_{o} \Vert t_{q,e}\,]
  \big)
  \big)
  }{
  \sum_{(e_s',r') \in \mathcal{N}^{D}(o)}
  \exp\!\big(
  W^{l}_{4}\,
  \sigma\!\big(
  W^{l}_{5}\,
  [\,\mathbf{e}^{\prime l}_{s} \Vert \hat{\mathbf{r}}^{\prime D} \Vert \mathbf{e}^{l}_{o} \Vert t_{q,e'}\,]
  \big)
  \big)
  } .
\end{equation}
\begin{equation}
  \theta^{E,l}_{o,s}
  =
  \frac{
  \exp\!\big(
  W^{l}_{6}\,
  \sigma\!\big(
  W^{l}_{7}\,
  [\,\mathbf{e}^{l}_{s} \Vert \hat{\mathbf{r}}^{E} \Vert \mathbf{e}^{l}_{o}\,]
  \big)
  \big)
  }{
  \sum_{(e_s^{\prime},r) \in \mathcal{N}^{E}(o)}
  \exp\!\big(
  W^{l}_{6}\,
  \sigma\!\big(
  W^{l}_{7}\,
  [\,\mathbf{e}^{\prime l}_{s} \Vert \hat{\mathbf{r}}^{E} \Vert \mathbf{e}^{l}_{o}\,]
  \big)
  \big)
  } .
\end{equation}
For both graph views $v\in\{E,D\}$, entity representations are updated by:
\begin{equation}
  \mathbf{e}^{\,v,l+1}_{o}
  =
  \sigma(
  \sum_{(e_s,r,e_o)\in \mathcal{G}^{v}_{t_q}}
  \theta^{\,v,l}_{o,s}\,
  W^{\,v,l}_{8}\,
  \psi\!\left(\mathbf{e}^{\,v,l}_{s} + \hat{\mathbf{r}}^{\,v,l}\right)
  \;+\;
  W^{\,v,l}_{9}\,
  \mathbf{e}^{\,v,l}_{o}
  ),
\end{equation}
where $\mathbf{e}^{v,l}_{s}$ and $\mathbf{e}^{v,l}_{o}$ denote the source and target entities' embeddings at layer $l$, $\hat{\mathbf{r}}^{v,l}$ denotes the decomposed relation embedding, $\mathcal{N}^{v}(o)$ denotes the incoming neighbors of $e_o$ in graph view $v$, $W^{(\cdot)}$ are learnable parameters, $\psi(\cdot)$ denotes a 1-D convolution operator, and $\sigma(\cdot)$ is an activation function.

\subsection{Contrastive Alignment}

After encoding the two graph views with EDGE and HEGE, we obtain entity and relation representations from both graphs.
Given a query $q=(e_s,r,?,t_q)\in\mathcal{Q}_{t_q}$, we construct view-specific query representations and align them using an InfoNCE-based contrastive objective:
\begin{equation}
\begin{aligned}
\mathbf{z}^{v}_q
&=
\mathrm{MLP}^{v}\big([\mathbf{e}^{v}_{s} \Vert \mathbf{r}^{v}]\big),
\quad v\in\{D,E\},\\
\mathcal{L}_{D \rightarrow E}
&=
- \mathbb{E}_{q \in \mathcal{Q}_{t_q}}
\log
\frac{
\exp\!\left( \mathbf{z}^{D}_q \cdot \mathbf{z}^{E}_q / \gamma \right)
}{
\sum_{q' \in \mathcal{Q}_{t_q}}
\exp\!\left( \mathbf{z}^{D}_q \cdot \mathbf{z}^{E}_{q'} / \gamma \right)
}.
\end{aligned}
\label{eq:contrastive_alignment}
\end{equation}
where $\mathrm{MLP}^{v}$ is implemented with an LN-GEGLU-Drop-Linear block with independent parameters, and $\gamma$ is a temperature parameter.
The final contrastive objective is defined symmetrically as $\mathcal{L}_{\mathrm{CoA}}=\mathcal{L}_{D \rightarrow E}+\mathcal{L}_{E \rightarrow D}$, where $\mathcal{L}_{E \rightarrow D}$ is defined analogously.
This contrastive alignment encourages the two views to capture task-relevant cross-view signals rather than task-irrelevant noise.

\subsection{Prediction and Optimization}

For link prediction, CHE-TKG adopts ConvTransE following \cite{li2021temporal,chen2024local,zhang2025historically} as the scoring function to compute the compatibility score between a query $(e_s,r,?,t)$ and a candidate entity $e_o$.
Given representations learned from the two graph views, the final prediction score is obtained by summing the view-specific scores:
\begin{equation}
f(e_o \mid e_s, r, t)
=
\sum_{v\in\{E,D\}}
f^{v}(e_o \mid e_s, r, t),
\qquad
f^{v}(e_o \mid e_s, r, t)
=
\operatorname{ConvTransE}\!\left(\mathbf{e}^{v}_{s,t}, \mathbf{r}^{v}_t\right)
\cdot \mathbf{e}^{v}_{o,t}.
\label{eq:prediction_score}
\end{equation}
where $\mathbf{e}^{v}_{s,t}$, $\mathbf{r}^{v}_t$, and $\mathbf{e}^{v}_{o,t}$ denote the subject entity, relation, and candidate object representations in view $v$, respectively.
To enhance relation representations, we further introduce relation prediction as an auxiliary task.
Let $\mathcal{Q}^{e}_{t}$ and $\mathcal{Q}^{r}_{t}$ denote the sets of entity and relation prediction queries at timestamp $t$, respectively.
The task objective is defined as:
\begin{equation}
\mathcal{L}_{\mathrm{TKG}}
=
\alpha
\sum_{(e_s,r,e_o,t)\in \mathcal{Q}_t^{e}}
\mathcal{L}_{\mathrm{CE}}\!\left(e_o \mid e_s, r, t\right)
+
(1-\alpha)
\sum_{(e_s,r,e_o,t)\in \mathcal{Q}_t^{r}}
\mathcal{L}_{\mathrm{CE}}\!\left(r \mid e_s, e_o, t\right).
\label{eq:rel_pred}
\end{equation}
where $\mathcal{L}_{\mathrm{CE}}(\cdot)$ denotes the cross-entropy loss, $\alpha\in[0,1]$ balances entity and relation prediction tasks.
Finally, the overall training objective combines the task loss with the contrastive alignment loss:
\begin{equation}
\mathcal{L}
=
\mathcal{L}_{\mathrm{TKG}}
+
\mu\,\mathcal{L}_{\mathrm{CoA}},
\end{equation}
where $\mu\ge0$ controls the contribution of contrastive alignment.

\subsection{Theoretical Analysis}

Temporal knowledge graph reasoning under the extrapolation setting can be formulated as a ranking problem.
Since CHE-TKG produces two view-specific scores from historical evidence and evolutionary dynamics, we analyze the corresponding pair-level score margins to understand when such dual-view signals improve ranking reliability.
Given a query $q=(s,r,?,t)$, the model aims to assign a higher score to the ground-truth entity $o^+$ than to a negative candidate $o^-$.
For each query--negative pair $(q,o^-)$, ranking correctness is characterized by the margin:
\begin{equation}
\gamma(q,o^-)=S(q,o^+)-S(q,o^-),
\end{equation}
where $\gamma(q,o^-)>0$ indicates a correct ranking and $\gamma(q,o^-)\le0$ indicates a pairwise ranking error.
Thus, increasing the ranking margin is crucial for improving ranking metrics.

CHE-TKG performs prediction from two views: the evolutionary dynamics view and the historical evidence view, which produce scores $S_D(q,o)$ and $S_E(q,o)$, respectively.
The final score is obtained via score-level fusion, and the corresponding view-specific and fused margins are defined as:
\begin{equation}
\begin{gathered}
S_{D+E}(q,o)=S_D(q,o)+S_E(q,o), \\
\gamma_v(q,o^-)=S_v(q,o^+)-S_v(q,o^-), \quad v\in\{D,E\}, \\
\gamma_{D+E}(q,o^-)=S_{D+E}(q,o^+)-S_{D+E}(q,o^-).
\end{gathered}
\end{equation}
By score additivity, the fused margin satisfies:
\begin{equation}
\gamma_{D+E}(q,o^-)=\gamma_D(q,o^-)+\gamma_E(q,o^-).
\end{equation}
This shows that score-level fusion aggregates ranking signals from both views.
Intuitively, if the two views provide positive yet non-identical ranking signals, the fused margin is expected to increase to reduce pairwise ranking errors.
We formalize this intuition through the following result.
For analytical clarity, we suppose that the two margins follow identical Gaussian marginals,
$
\gamma_D, \gamma_E \sim \mathcal{N}(\mu,\sigma^2).
$
\newpage
\begin{theorem}[Ranking Error Reduction]\label{theo:error}
Let $\gamma_D$ and $\gamma_E$ denote the pairwise ranking margins induced by the evolutionary dynamics view and the historical evidence view, respectively. 
Under the Gaussian margin model above, if the two view-specific margins satisfy
$
\mathbb{E}[\gamma_D] > 0, 
\mathbb{E}[\gamma_E] > 0,
$
and
$
\operatorname{Corr}(\gamma_D,\gamma_E)=\rho < 1,
$
then the fused margin $\gamma_{D+E}=\gamma_D+\gamma_E$ satisfies
\begin{equation}
P(\gamma_{D+E}\le 0) < P(\gamma_D\le 0) = P(\gamma_E\le 0).
\end{equation}
\end{theorem}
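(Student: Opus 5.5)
The plan is to reduce everything to a comparison of standard normal CDF values. Write $\Phi$ for the standard normal CDF. We assume $\sigma>0$ and $\mu>0$, which is what $\mathbb{E}[\gamma_D]=\mathbb{E}[\gamma_E]>0$ means under the Gaussian margin model with common marginals $\mathcal{N}(\mu,\sigma^2)$.

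For the single views, standardizing gives
\begin{equation*}
P(\gamma_D\le 0)=P(\gamma_E\le 0)=\Phi(-\mu/\sigma).
\end{equation*}
This already gives the equality on the right of the statement.

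For the fused margin, we first need its distribution. Identical Gaussian marginals alone do not make $\gamma_D+\gamma_E$ Gaussian. I would therefore read ``the Gaussian margin model above'' as saying that $(\gamma_D,\gamma_E)$ is jointly Gaussian, and state this explicitly. This interpretive step is the main obstacle: without joint Gaussianity the sum's law is not determined by $\mu,\sigma,\rho$, and the claim can fail for adversarial copulas. Under joint Gaussianity, linearity of expectation and bilinearity of covariance give
\begin{equation*}
\gamma_{D+E}\sim\mathcal{N}\bigl(2\mu,\;2\sigma^2(1+\rho)\bigr).
\end{equation*}

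Next I would handle the degenerate case $\rho=-1$. There the variance is $0$, so $\gamma_{D+E}=2\mu>0$ almost surely. Hence $P(\gamma_{D+E}\le 0)=0<\Phi(-\mu/\sigma)$, and the inequality holds trivially.

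For $-1<\rho<1$, standardizing gives
\begin{equation*}
P(\gamma_{D+E}\le 0)=\Phi\!\left(-\frac{2\mu}{\sigma\sqrt{2(1+\rho)}}\right)=\Phi\!\left(-\frac{\mu}{\sigma}\sqrt{\frac{2}{1+\rho}}\right).
\end{equation*}
Since $\rho<1$, we have $\sqrt{2/(1+\rho)}>1$. Together with $\mu/\sigma>0$, the argument is strictly smaller than $-\mu/\sigma$. Because $\Phi$ is strictly increasing on $\mathbb{R}$, it follows that $P(\gamma_{D+E}\le 0)<\Phi(-\mu/\sigma)=P(\gamma_D\le 0)$, which completes the argument.

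As a remark, the proof shows where each hypothesis enters. Positivity of $\mu$ makes scaling the standardized threshold by a factor above one beneficial. The condition $\rho<1$ makes that factor strictly exceed one. At $\rho=1$ the two views coincide in distribution and fusion gives no gain.
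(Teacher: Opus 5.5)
Your proposal is correct and follows essentially the same route as the paper's proof: assume $(\gamma_D,\gamma_E)$ is jointly Gaussian, obtain $\gamma_{D+E}\sim\mathcal{N}(2\mu,2\sigma^2(1+\rho))$, standardize, and compare the two thresholds using the strict monotonicity of $\Phi$. Your explicit treatment of the degenerate case $\rho=-1$ (zero variance, where the paper's standardization divides by zero) is a small but genuine tightening of the paper's argument.
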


The first condition in Theorem~\ref{theo:error}, namely
$\mathbb{E}[\gamma_D]>0$ and $\mathbb{E}[\gamma_E]>0$, characterizes the \textbf{View Effectiveness} property: each view provides positive ranking signals on average. 
The second condition, $\operatorname{Corr}(\gamma_D,\gamma_E)=\rho<1$, characterizes the \textbf{View Non-redundancy} property: the ranking signals from the two views are not perfectly redundant. 
We refer to signals satisfying both properties as \textbf{complementary predictive signals}, i.e., effective and non-redundant pairwise ranking margins.
CHE-TKG is designed to learn such complementary predictive signals by explicitly separating historical evidence and evolutionary dynamics into two view-specific structural views.
Relation decomposition preserves view-specific relation semantics, while contrastive alignment maintains prediction-aligned semantic consistency across views, jointly encouraging the two views to be both effective and non-redundant.
Complementary predictive signals are empirically validated in Section~\ref{exp:signal}, and the proof of Theorem~\ref{theo:error} is provided in Appendix~\ref{proof}.

\begin{table*}[]
  \centering
  \setlength{\tabcolsep}{3pt} 
  \renewcommand{\arraystretch}{1.0}  
  \caption{Performance of TKG entity extrapolation on ICEWS14s, ICEWS18, ICEWS05-15 and GDELT. The best results are marked in \textbf{bold}, and the second-best results are \underline{underlined}.}
  \label{tab:main-results}

  \resizebox{1.0\textwidth}{!}{
  \begin{tabular}{l *{3}{c} *{3}{c} *{3}{c} *{3}{c}}
  \toprule
  \multirow{2}{*}{\textbf{Model}}
  & \multicolumn{3}{c}{\textbf{ICEWS14s}}
  & \multicolumn{3}{c}{\textbf{ICEWS18}}
  & \multicolumn{3}{c}{\textbf{ICEWS05-15}}
  & \multicolumn{3}{c}{\textbf{GDELT}} \\
  \cmidrule(lr){2-4}\cmidrule(lr){5-7}\cmidrule(lr){8-10}\cmidrule(lr){11-13}

  & \textbf{MRR} & \textbf{Hits@1} & \textbf{Hits@10}
  & \textbf{MRR} & \textbf{Hits@1} & \textbf{Hits@10}
  & \textbf{MRR} & \textbf{Hits@1} & \textbf{Hits@10}
  & \textbf{MRR} & \textbf{Hits@1} & \textbf{Hits@10} \\
  \midrule

  RE-NET & 36.93 & 26.83 & 54.78 & 29.78 & 19.73 & 48.46 & 43.67 & 33.55 & 62.72 & 19.55 & 12.38 & 34.00 \\
  RE-GCN & 42.39 & 31.96 & 62.29 & 32.62 & 22.39 & 52.68 & 48.03 & 37.33 & 68.51 & 19.69 & 12.46 & 33.81 \\
  RETIA  & 42.76 & 32.28 & 62.75 & 32.43 & 22.23 & 52.94 & 47.26 & 36.64 & 67.76 & 20.12 & 12.76 & 34.49 \\
  RPC    & --    & --    & --    & 34.91 & 24.34 & 55.89 & 51.14 & 39.47 & 71.75 & 22.41 & 14.42 & 38.33 \\
  TANGO  & 36.80 & 27.43 & 54.93 & 28.68 & 19.35 & 47.04 & 42.86 & 32.72 & 62.34 & 19.53 & 12.43 & 33.19 \\
  HisMatch & 46.42 & 35.91 & 66.84 & 33.99 & 23.91 & 53.94 & 52.85 & 42.01 & 73.28 & 22.01 & 14.45 & 36.61 \\
  CEN    & 42.20 & 32.08 & 61.31 & 31.50 & 21.70 & 50.59 & 46.84 & 36.38 & 67.01 & 20.39 & 12.96 & 34.97 \\
  HTCCN  & 45.39 & 36.58 & --    & 35.63 & 24.90 & --    & 51.94 & 40.32 & --    & 23.46 & 15.18 & --    \\
  HERLN  & 43.94 & 34.62 & 63.44 & 31.33 & 21.93 & 52.01 & --    & --    & --    & --    & --    & --    \\
  TiRGN  & 44.75 & 34.26 & 65.28 & 33.66 & 23.19 & 54.22 & 50.04 & 39.25 & 70.71 & 21.67 & 13.63 & 37.60 \\
  LogCL  & 48.87 & 37.76 & 70.26 & 35.67 & 24.53 & 57.74 & 57.04 & 46.07 & 77.87 & 23.75 & 14.64 & 42.33 \\
  DyMemR & 46.12 & 36.38 & 65.15 & 35.50 & 25.29 & 55.44 & 53.76 & 44.68 & 70.84 & 25.46 & 16.79 & 42.49 \\
  HisRES & \underline{50.48} & \underline{39.57} & \underline{71.09}
         & \underline{37.69} & \underline{26.46} & \underline{59.70}
         & \underline{59.07} & \underline{48.62} & \underline{78.48}
         & \underline{26.58} & \underline{16.90} & \underline{46.31} \\

  \cmidrule(l){1-13}

  \textbf{CHE-TKG}
  & \textbf{51.91} & \textbf{41.37} & \textbf{72.17}
  & \textbf{38.77} & \textbf{27.58} & \textbf{60.95}
  & \textbf{60.37} & \textbf{50.04} & \textbf{79.44}
  & \textbf{27.38} & \textbf{17.73} & \textbf{47.02} \\

  \bottomrule
  \end{tabular}
  }
  \label{tab:main_result}
\end{table*}

\section{Experiments}
\subsection{Experimental Settings}
\textbf{Datasets.} We evaluate our method on four widely used benchmark datasets for TKG extrapolation: ICEWS14s, ICEWS18, ICEWS05-15, and GDELT, adopting the standard data splits provided by prior works \cite{li2021temporal,liang2023learn,chen2024local,zhang2025historically}.

\textbf{Evaluation Protocols.} Following prior work, we evaluate the entity extrapolation task using time-aware filtered metrics, including Mean Reciprocal Rank (MRR) and Hits@$\{1,10\}$, and report all results in percentage form. All results are averaged over three runs with different random seeds.

\textbf{Compared Baselines.} We compare our method with a diverse set of embedding-based baselines, including RE-NET \cite{jin2020recurrent}, RE-GCN \cite{li2021temporal}, RETIA \cite{liu2023retia}, RPC \cite{liang2023learn}, TANGO \cite{han2021learning}, HisMatch \cite{li2022hismatch}, CEN \cite{li2022complex}, HTCCN \cite{chen2024htccn}, HERLN \cite{du2025hawkes}, TiRGN \cite{li2022tirgn}, LogCL \cite{chen2024local}, DyMemR \cite{zhang2024temporal}, and HisRES \cite{zhang2025historically}.

More detailed experimental settings and analyses, including ablation study, sensitivity analysis, analysis of complementary predictive signals, statistical analysis of results, relation prediction, robustness analysis, visualization analysis, complexity analysis, and execution time analysis, are provided in Appendix~\ref{appe:exper}.

\subsection{Main Results}

Table~\ref{tab:main_result} reports the time-aware filtered evaluation results for the entity extrapolation task on four benchmark datasets.
Overall, CHE-TKG consistently outperforms all the strong baselines.
Specifically, relative to the second-best results, CHE-TKG obtains substantial MRR improvements of 2.9\%, 3.2\%, 2.3\%, and 3.1\% on ICEWS14s, ICEWS18, ICEWS05-15, and GDELT, respectively. 
These consistent gains indicate the advantage of our approach, which explicitly separates and jointly models historical evidence and evolutionary dynamics, and further leverages relation decomposition and contrastive alignment to better preserve and exploit complementary predictive signals, thereby improving ranking reliability and prediction accuracy.
Overall, the results verify the effectiveness of our CHE-TKG for TKGR.

\subsection{Ablation Study}

We conduct ablation studies on ICEWS14s and ICEWS18 using MRR and Hits@\{1,10\}.
The results are summarized in Table~\ref{tab:ablation}, which reports the following variants: the full CHE-TKG; CHE-TKG without the evolutionary dynamics graph ({-w/o-}$\mathcal{G}^{D}$); 
without the historical evidence graph ({-w/o-}$\mathcal{G}^{E}$); 
without both contrastive alignment and relation decomposition, while using both graphs ({-w/o-(CoA \& ReD)});
without contrastive alignment, while using both graphs and relation decomposition ({-w/o-CoA}); 
and without disentangled learning ({-w/o-DiL}).

Overall, removing either the historical evidence graph or the evolutionary dynamics graph results in substantial performance degradation, whereas jointly modeling both consistently yields the best results, validating the effectiveness of the proposed collaborative learning framework. 
Excluding relation decomposition or contrastive alignment also results in clear performance drops, highlighting the importance of view-specific relation representations and cross-view alignment for suppressing view-specific noise.
We further conduct a simple coupled learning variant ({-w/o-DiL}), where the historical evidence graph is learned first, followed by the evolutionary dynamics graph. 
This variant surpasses both single-graph counterparts, highlighting the importance of collaborative learning over the two views.
However, it still underperforms compared to the variant that jointly learns from both graphs ({-w/o-(CoA \& ReD)}), and further falls short of the full CHE-TKG. 
This highlights the effectiveness of our approach in better preserving and exploiting complementary predictive signals.
\vspace{-5pt}

\begin{table*}[h]

  \centering
  \setlength{\tabcolsep}{2pt}
  \caption{Ablation results on ICEWS14s and ICEWS18.}

  \begin{tabular}{lcccccc}
  \toprule
  \multirow{2}{*}{Variant} 
  & \multicolumn{3}{c}{ICEWS14s} 
  & \multicolumn{3}{c}{ICEWS18} \\
  \cmidrule(lr){2-4}\cmidrule(lr){5-7}
  & MRR & Hits@1 & Hits@10 
  & MRR & Hits@1 & Hits@10 \\
  \midrule

  CHE-TKG
  & \textbf{51.91} & \textbf{41.37} & \textbf{72.17}
  & \textbf{38.77} & \textbf{27.58} & \textbf{60.95} \\
  \midrule
  
  -w/o-$\mathcal{G}^{D}$
  & 49.54 & 38.82 & 70.00
  & 36.88 & 25.53 & 59.40 \\
  -w/o-$\mathcal{G}^{E}$
  & 50.29 & 39.75 & 70.55
  & 36.98 & 25.81 & 58.92 \\
  -w/o-(CoA \& ReD)
  & 51.37 & 40.91 & 71.50
  & 38.42 & 27.15 & 60.80 \\
  -w/o-CoA
  & 51.70 & 41.19 & 71.73
  & 38.62 & 27.46 & 60.57 \\
  
  -w/o-DiL
  & 51.03 & 40.41 & 71.13 
  & 38.13 & 26.73 & 60.56 \\

  \bottomrule
  \end{tabular}
  \label{tab:ablation}
\end{table*}
\vspace{-5pt}

\begin{table*}[ht!]
  \centering
  \setlength{\tabcolsep}{2pt}
  \caption{Case study of CHE-TKG's entity extrapolation performance on the ICEWS14s dataset. The ground-truth entity is highlighted in bold, and the corresponding prediction scores are provided.}
  \resizebox{1.0\textwidth}{!}{
  \begin{tabular}{l cc cc cc}
  \toprule
  \multirow{2}{*}{Query} 
    & \multicolumn{2}{c}{CHE-TKG-w/$\mathcal{G}^D$} 
    & \multicolumn{2}{c}{CHE-TKG-w/$\mathcal{G}^E$}
    & \multicolumn{2}{c}{CHE-TKG} \\
  \cmidrule(lr){2-3}\cmidrule(lr){4-5}\cmidrule(lr){6-7}
    & Hits@5 Entity & Score & Hits@5 Entity & Score & Hits@5 Entity & Score \\
  \midrule
  
  \multirow{5}{*}{\makecell[l]{
    \makecell[l]{$e_s:Head\_of$ \\ $\_Government\_(India)$} \\
    $r:Praise\_or\_endorse$ \\
    $t:2014-12-01$ }}
    & \makecell[c]{$Member\_of\_$ \\ $Parliament\_(India)$} & 5.1521 & $Citizen\_(India)$ & 4.6465 & $\bm{Employee\_(India)}$ & \textbf{9.5145} \\
    & $Governor\_(India)$               & 5.1514 & \makecell[c]{$Company\_-\_Owner$ \\ $\_or\_Operator\_(India)$} & 4.4339 & \makecell[c]{$Member \_of$ \\ $\_Parliament\_(India)$} & 9.1098 \\
    & $\bm{Employee\_(India)}$          & \textbf{5.1435} & \makecell[c]{$Indigenous\_$ \\ $People\_(India)$} & 4.4003 & $Governor\_(India)$ & 8.9817 \\
    & $Labor\_Union\_(India)$           & 4.4350 & $\bm{Employee\_(India)}$ & \textbf{4.3710} & $Citizen\_(India)$ & 8.7354 \\
    & \makecell[c]{$Administrative$ \\ $\_Body\_(India)$}   & 4.3371 & $Villager\_(India)$ & 4.2844 & $Labor\_Union\_(India)$ & 7.9872 \\
  \midrule
  
  
  \multirow{5}{*}{\makecell[l]{
    $e_s:Afghanistan$ \\
    \makecell[l]{$r:Engage\_in$ \\ $\_negotiation$} \\
    $t:2014-12-25$ }}
    & $Abdullah\_Abdullah$       & 8.0501 & $Iraq$ & 10.2351 & $\bm{Tajikistan}$ & \textbf{16.7835} \\
    & $Raheel\_Sharif$           & 8.0464 & $\bm{Tajikistan}$ & \textbf{9.9402} & $Iraq$ & 15.1064 \\
    & $China$                    & 7.3678 & $Afghanistan$ & 9.1128 & $China$ & 14.8508 \\
    & \makecell[c]{$Refugee\_$ \\ $(Afghanistan)$}   & 7.0583 & $Iran$ & 7.9106 & $Iran$ & 14.1855 \\
    & $\bm{Tajikistan}$          & \textbf{6.8432} & $China$ & 7.4830 & $Afghanistan$ & 12.9165 \\
  
  \bottomrule
  \end{tabular}}
  \label{tab:dual_hits5}
\end{table*}

\subsection{Case Study}

To further demonstrate the effectiveness of CHE-TKG, we conduct a qualitative analysis on two representative facts from the ICEWS14s dataset using different model variants.
The results are reported in Table~\ref{tab:dual_hits5}, where “CHE-TKG-w/$\mathcal{G}^{D}$” and “CHE-TKG-w/$\mathcal{G}^{E}$” denote variants that rely solely on the evolutionary dynamics graph and the historical evidence graph, respectively, and “CHE-TKG” represents the full model.
The ground-truth entity for each query is highlighted in bold, and the top-ranked entities under the Hits@5 setting along with their scores are reported.
From the first example, the ground-truth entity \textit{Employee (India)} appears in the top-5 predictions when using either graph alone, but is not ranked at the top by either variant.
By combining scores from both graph views, CHE-TKG assigns the highest score to the ground-truth entity, leading to a correct prediction. 
These cases demonstrate that the historical evidence graph and the evolutionary dynamics graph provide non-redundant information.
Similar observations can be made for the other example.
Jointly modeling both views enables more accurate and reliable entity extrapolation, highlighting the effectiveness of the proposed collaborative learning framework.

\subsection{Analysis of Complementary Predictive Signals}\label{exp:signal}

We compute pair-level score margins and report the statistics in Table~\ref{tab:statistics}. As shown in the table, across all datasets, both $\bar{\gamma}_D$ and $\bar{\gamma}_E$ are positive, and the pairwise correlation $\rho_{\text{pair}}$ is smaller than 1. These observations empirically validate the existence of complementary predictive signals as characterized in Theorem~\ref{theo:error}. 
Although the empirical margins do not exactly satisfy the identical-marginal assumption, the observed trend is consistent with the theorem.
We further report the pairwise error probabilities $P_{\mathrm{err}}^D$, $P_{\mathrm{err}}^E$, and $P_{\mathrm{err}}^{D+E}$. The results show that
$
P_{\mathrm{err}}^{D+E} < \min\{P_{\mathrm{err}}^D, P_{\mathrm{err}}^E\},
$
which is consistent with the theoretical results in Theorem~\ref{theo:error}.
This reduction in pairwise ranking errors also provides an explanation for the improvement in MRR and Hits@K. For each query, the rank of the ground-truth entity is determined by the number of negative entities that receive higher scores. Therefore, reducing the probability that negative entities outrank the ground-truth entity improves ranking reliability and leads to better MRR and Hits@K.
\vspace{-5pt}

\begin{table}[h]
  \centering
  \setlength{\tabcolsep}{4pt}
  \renewcommand{\arraystretch}{1.1}
  \caption{Statistics on different datasets.}
  \label{tab:statistics}
  
  \begin{tabular}{lccccccc}
  \toprule
  Dataset 
  & $\bar{\gamma}_D$ 
  & $\bar{\gamma}_E$ 
  & $\bar{\gamma}_{D+E}$ 
  & $\rho_{\text{pair}}$ 
  & $P_{\mathrm{err}}^D$ 
  & $P_{\mathrm{err}}^E$ 
  & $P_{\mathrm{err}}^{D+E}$ \\
  \midrule
  
  ICEWS14s   & 7.17 & 5.78 & 12.95 & 0.723 & 0.0206 & 0.0283 & 0.0196 \\
  ICEWS18    & 7.45 & 5.45 & 12.90 & 0.754 & 0.0151 & 0.0186 & 0.0141 \\
  ICEWS05-15 & 8.51 & 5.24 & 13.75 & 0.724 & 0.0187 & 0.0280 & 0.0178 \\
  GDELT      & 6.28 & 3.89 & 10.17 & 0.750 & 0.0163 & 0.0203 & 0.0139 \\
  
  \bottomrule
  \end{tabular}
  \end{table}
  \vspace{-5pt}
  \section{Conclusion}
  In this work, we proposed CHE-TKG, a collaborative dual-view learning framework for TKGR. 
  CHE-TKG explicitly separates historical evidence and evolutionary dynamics into two view-specific structural views, allowing the model to learn complementary predictive signals for future fact prediction. 
  The historical evidence view captures long-term structural regularities, while the evolutionary dynamics view models temporal transitions and recent changes. 
  Relation decomposition and contrastive alignment are further introduced to learn view-specific relation semantics and promote task-relevant complementarity between the two views. 
  We provided a theoretical analysis showing that effective and non-redundant predictive signals can improve ranking reliability by reducing pair-level ranking errors. 
  Experiments on multiple benchmark datasets demonstrate that CHE-TKG achieves state-of-the-art performance in extrapolation settings. 
  Overall, these results highlight the benefits of collaboratively modeling historical evidence and evolutionary dynamics for TKGR.

\bibliography{sample-base}

@inproceedings{liu2022tlogic,
  title={TLogic: Temporal logical rules for explainable link forecasting on temporal knowledge graphs},
  author={Liu, Yushan and Ma, Yunpu and Hildebrandt, Marcel and Joblin, Mitchell and Tresp, Volker},
  booktitle={Proceedings of the AAAI Conference on Artificial Intelligence},
  pages={4120--4127},
  year={2022}
}

@inproceedings{li2021temporal,
  title={Temporal knowledge graph reasoning based on evolutional representation learning},
  author={Li, Zixuan and Jin, Xiaolong and Li, Wei and Guan, Saiping and Guo, Jiafeng and Shen, Huawei and Wang, Yuanzhuo and Cheng, Xueqi},
  booktitle={Proceedings of the ACM SIGIR Conference on Research and Development in Information Retrieval},
  pages={408--417},
  year={2021}
}

@inproceedings{liang2023learn,
  title={Learn from relational correlations and periodic events for temporal knowledge graph reasoning},
  author={Liang, Ke and Meng, Lingyuan and Liu, Meng and Liu, Yue and Tu, Wenxuan and Wang, Siwei and Zhou, Sihang and Liu, Xinwang},
  booktitle={Proceedings of the ACM SIGIR Conference on Research and Development in Information Retrieval},
  pages={1559--1568},
  year={2023}
}

@inproceedings{zhang2025historically,
  title={Historically relevant event structuring for temporal knowledge graph reasoning},
  author={Zhang, Jinchuan and Sun, Ming and Mu, Chong and Zhang, Jinhao and Guo, Quanjiang and Tian, Ling},
  booktitle={IEEE International Conference on Data Engineering (ICDE)},
  pages={3179--3192},
  year={2025},
  organization={IEEE}
}

@article{bordes2013translating,
  title={Translating embeddings for modeling multi-relational data},
  author={Bordes, Antoine and Usunier, Nicolas and Garcia-Duran, Alberto and Weston, Jason and Yakhnenko, Oksana},
  journal={Advances in neural information processing systems},
  year={2013}
}

@inproceedings{leblay2018deriving,
  title={Deriving Validity Time in Knowledge Graph},
  author={Leblay, Julien and Chekol, Melisachew Wudage},
  booktitle={Proceedings of the ACM Web Conference},
  year={2018},
}

@inproceedings{lacroixtensor,
  title={Tensor Decompositions for Temporal Knowledge Base Completion},
  author={Lacroix, Timoth{\'e}e and Obozinski, Guillaume and Usunier, Nicolas},
  booktitle={International Conference on Learning Representations},
  year={2020}
}

@inproceedings{messner2022temporal,
  title={Temporal knowledge graph completion using box embeddings},
  author={Messner, Johannes and Abboud, Ralph and Ceylan, Ismail Ilkan},
  booktitle={Proceedings of the AAAI Conference on Artificial Intelligence},
  pages={7779--7787},
  year={2022}
}

@inproceedings{fang2024transformer,
  title={Transformer-based reasoning for learning evolutionary chain of events on temporal knowledge graph},
  author={Fang, Zhiyu and Lei, Shuai-Long and Zhu, Xiaobin and Yang, Chun and Zhang, Shi-Xue and Yin, Xu-Cheng and Qin, Jingyan},
  booktitle={Proceedings of the ACM SIGIR Conference on Research and Development in Information Retrieval},
  pages={70--79},
  year={2024}
}

@inproceedings{fang2024arbitrary,
  title={Arbitrary Time Information Modeling via Polynomial Approximation for Temporal Knowledge Graph Embedding},
  author={Fang, Zhiyu and Qin, Jingyan and Zhu, Xiaobin and Yang, Chun and Yin, Xu-Cheng},
  booktitle={Proceedings of the Joint International Conference on Computational Linguistics, Language Resources and Evaluation},
  pages={1455--1465},
  year={2024}
}

@inproceedings{liu2025terdy,
  title={Terdy: Temporal relation dynamics through frequency decomposition for temporal knowledge graph completion},
  author={Liu, Ziyang and Wang, Chaokun},
  booktitle={Proceedings of the Annual Meeting of the Association for Computational Linguistics},
  pages={9611--9622},
  year={2025}
}

@inproceedings{ying2024simple,
  title={Simple but Effective Compound Geometric Operations for Temporal Knowledge Graph Completion},
  author={Ying, Rui and Hu, Mengting and Wu, Jianfeng and Xie, Yalan and Liu, Xiaoyi and Wang, Zhunheng and Jiang, Ming and Gao, Hang and Zhang, Linlin and Cheng, Renhong},
  booktitle={Proceedings of the Annual Meeting of the Association for Computational Linguistics},
  pages={11074--11086},
  year={2024}
}

@inproceedings{li2023teast,
  title={Teast: Temporal knowledge graph embedding via archimedean spiral timeline},
  author={Li, Jiang and Su, Xiangdong and Gao, Guanglai},
  booktitle={Proceedings of the Annual Meeting of the Association for Computational Linguistics},
  pages={15460--15474},
  year={2023}
}

@inproceedings{du2025hawkes,
  title={Hawkes based Representation Learning for Reasoning over Scale-free Community-structured Temporal Knowledge Graphs},
  author={Du, Yuwei and Liu, Xinyue and Liang, Wenxin and Zong, Linlin and Zhang, Xianchao},
  booktitle={Proceedings of the International Conference on Computational Linguistics},
  pages={2935--2946},
  year={2025}
}

@inproceedings{chen2024local,
  title={Local-global history-aware contrastive learning for temporal knowledge graph reasoning},
  author={Chen, Wei and Wan, Huaiyu and Wu, Yuting and Zhao, Shuyuan and Cheng, Jiayaqi and Li, Yuxin and Lin, Youfang},
  booktitle={IEEE International Conference on Data Engineering (ICDE)},
  pages={733--746},
  year={2024},
  organization={IEEE}
}

@inproceedings{liu2023retia,
  title={RETIA: relation-entity twin-interact aggregation for temporal knowledge graph extrapolation},
  author={Liu, Kangzheng and Zhao, Feng and Xu, Guandong and Wang, Xianzhi and Jin, Hai},
  booktitle={IEEE International Conference on Data Engineering (ICDE)},
  pages={1761--1774},
  year={2023},
  organization={IEEE}
}

@inproceedings{li2022tirgn,
  title={Tirgn: Time-guided recurrent graph network with local-global historical patterns for temporal knowledge graph reasoning.},
  author={Li, Yujia and Sun, Shiliang and Zhao, Jing},
  booktitle={Proceedings of the International Joint Conference on Artificial Intelligence},
  pages={2152--2158},
  year={2022}
}

@inproceedings{han2021learning,
  title={Learning neural ordinary equations for forecasting future links on temporal knowledge graphs},
  author={Han, Zhen and Ding, Zifeng and Ma, Yunpu and Gu, Yujia and Tresp, Volker},
  booktitle={Proceedings of the Conference on Empirical Methods in Natural Language Processing},
  pages={8352--8364},
  year={2021}
}

@inproceedings{li2022hismatch,
  title={HiSMatch: Historical Structure Matching based Temporal Knowledge Graph Reasoning},
  author={Li, Zixuan and Hou, Zhongni and Guan, Saiping and Jin, Xiaolong and Peng, Weihua and Bai, Long and Lyu, Yajuan and Li, Wei and Guo, Jiafeng and Cheng, Xueqi},
  booktitle={Findings of the Association for Computational Linguistics: EMNLP},
  pages={7328--7338},
  year={2022}
}

@inproceedings{li2022complex,
  title={Complex Evolutional Pattern Learning for Temporal Knowledge Graph Reasoning},
  author={Li, Zixuan and Guan, Saiping and Jin, Xiaolong and Peng, Weihua and Lyu, Yajuan and Zhu, Yong and Bai, Long and Li, Wei and Guo, Jiafeng and Cheng, Xueqi},
  booktitle={Proceedings of the Annual Meeting of the Association for Computational Linguistics},
  pages={290--296},
  year={2022}
}

@inproceedings{sun2022graph,
  title={Graph hawkes transformer for extrapolated reasoning on temporal knowledge graphs},
  author={Sun, Haohai and Geng, Shangyi and Zhong, Jialun and Hu, Han and He, Kun},
  booktitle={Proceedings of the Conference on Empirical Methods in Natural Language Processing},
  pages={7481--7493},
  year={2022}
}

@inproceedings{zhang2023learning,
  title={Learning latent relations for temporal knowledge graph reasoning},
  author={Zhang, Mengqi and Xia, Yuwei and Liu, Qiang and Wu, Shu and Wang, Liang},
  booktitle={Proceedings of the Annual Meeting of the Association for Computational Linguistics},
  pages={12617--12631},
  year={2023}
}

@inproceedings{zhang2023learning_2,
  title={Learning long-and short-term representations for temporal knowledge graph reasoning},
  author={Zhang, Mengqi and Xia, Yuwei and Liu, Qiang and Wu, Shu and Wang, Liang},
  booktitle={Proceedings of the ACM Web Conference},
  pages={2412--2422},
  year={2023}
}

@inproceedings{tang2024editkg,
  title={Editkg: Editing knowledge graph for recommendation},
  author={Tang, Gu and Gan, Xiaoying and Wang, Jinghe and Lu, Bin and Wu, Lyuwen and Fu, Luoyi and Zhou, Chenghu},
  booktitle={Proceedings of the ACM SIGIR Conference on Research and Development in Information Retrieval},
  pages={112--122},
  year={2024}
}

@inproceedings{wang2023mixed,
  title={Mixed-curvature manifolds interaction learning for knowledge graph-aware recommendation},
  author={Wang, Jihu and Shi, Yuliang and Yu, Han and Wang, Xinjun and Yan, Zhongmin and Kong, Fanyu},
  booktitle={Proceedings of the ACM SIGIR Conference on Research and Development in Information Retrieval},
  pages={372--382},
  year={2023}
}

@inproceedings{xu2024retrieval,
  title={Retrieval-augmented generation with knowledge graphs for customer service question answering},
  author={Xu, Zhentao and Cruz, Mark Jerome and Guevara, Matthew and Wang, Tie and Deshpande, Manasi and Wang, Xiaofeng and Li, Zheng},
  booktitle={Proceedings of the ACM SIGIR Conference on Research and Development in Information Retrieval},
  pages={2905--2909},
  year={2024}
}

@inproceedings{ding2024enhancing,
  title={Enhancing complex question answering over knowledge graphs through evidence pattern retrieval},
  author={Ding, Wentao and Li, Jinmao and Luo, Liangchuan and Qu, Yuzhong},
  booktitle={Proceedings of the ACM Web Conference},
  pages={2106--2115},
  year={2024}
}

@inproceedings{atif2023beamqa,
  title={Beamqa: Multi-hop knowledge graph question answering with sequence-to-sequence prediction and beam search},
  author={Atif, Farah and El Khatib, Ola and Difallah, Djellel},
  booktitle={Proceedings of the ACM SIGIR Conference on Research and Development in Information Retrieval},
  pages={781--790},
  year={2023}
}

@inproceedings{liu2022joint,
  title={Joint knowledge graph completion and question answering},
  author={Liu, Lihui and Du, Boxin and Xu, Jiejun and Xia, Yinglong and Tong, Hanghang},
  booktitle={Proceedings of the ACM SIGKDD International Conference on Knowledge Discovery \& Data Mining},
  pages={1098--1108},
  year={2022}
}

@article{shazeer2020glu,
  title={Glu variants improve transformer},
  author={Shazeer, Noam},
  journal={arXiv preprint arXiv:2002.05202},
  year={2020}
}

@inproceedings{xu2020inductive,
  title     = {Inductive Representation Learning on Temporal Graphs},
  author    = {Xu, Da and Ruan, Chuanwei and Korpeoglu, Evren and Kumar, Sushant and Achan, Kannan},
  booktitle = {International Conference on Learning Representations},
  year      = {2020}
}

@dataset{Boschee2015ICEWS,
  author    = {Boschee, Elizabeth and
               Lautenschlager, Jennifer and
               O'Brien, Sean and
               Shellman, Steve and
               Starz, James and
               Ward, Michael},
  title     = {{ICEWS Coded Event Data}},
  year      = {2015},
  publisher = {Harvard Dataverse},
  doi       = {10.7910/DVN/28075},
  url       = {https://doi.org/10.7910/DVN/28075}
}

@inproceedings{leetaru2013gdelt,
  title={GDELT: Global data on events, location, and tone},
  author={Leetaru, Kalev and Schrodt, Philip A},
  booktitle={ISA Annual Convention},
  pages={1--49},
  year={2013}
}

@article{chen2021dacha,
  title={DACHA: A dual graph convolution based temporal knowledge graph representation learning method using historical relation},
  author={Chen, Ling and Tang, Xing and Chen, Weiqi and Qian, Yuntao and Li, Yansheng and Zhang, Yongjun},
  journal={ACM Transactions on Knowledge Discovery from Data},
  pages={1--18},
  year={2021},
  publisher={ACM New York, NY}
}

@inproceedings{chen2024htccn,
  title={Htccn: Temporal causal convolutional networks with hawkes process for extrapolation reasoning in temporal knowledge graphs},
  author={Chen, Tingxuan and Long, Jun and Yang, Liu and Wang, Zidong and Wang, Yongheng and Jin, Xiongnan},
  booktitle={Proceedings of the North American Chapter of the Association for Computational Linguistics},
  pages={4056--4066},
  year={2024}
}

@inproceedings{li2023tr,
  title={Tr-rules: Rule-based model for link forecasting on temporal knowledge graph considering temporal redundancy},
  author={Li, Ningyuan and Haihong, E and Li, Shi and Sun, Mingzhi and Yao, Tianyu and Song, Meina and Wang, Yong and Luo, Haoran},
  booktitle={Findings of the Association for Computational Linguistics: EMNLP},
  pages={7885--7894},
  year={2023}
}

@inproceedings{huang2024confidence,
  title={Confidence is not timeless: Modeling temporal validity for rule-based temporal knowledge graph forecasting},
  author={Huang, Rikui and Wei, Wei and Qu, Xiaoye and Zhang, Shengzhe and Chen, Dangyang and Cheng, Yu},
  booktitle={Proceedings of the Annual Meeting of the Association for Computational Linguistics},
  pages={10783--10794},
  year={2024}
}

@inproceedings{li2025infer,
  title={INFER: A Neural-symbolic Model For Extrapolation Reasoning on Temporal Knowledge Graph},
  author={Li, Ningyuan and Haihong, E and Yao, Tianyu and Hu, Tianyi and Li, Yuhan and Luo, Haoran and Song, Meina and Zhu, Yifan},
  booktitle={International Conference on Learning Representations},
  year={2025}
}

@inproceedings{lee2023temporal,
  title={Temporal Knowledge Graph Forecasting Without Knowledge Using In-Context Learning},
  author={Lee, Dong-Ho and Ahrabian, Kian and Jin, Woojeong and Morstatter, Fred and Pujara, Jay},
  booktitle={Proceedings of the Conference on Empirical Methods in Natural Language Processing},
  pages={544--557},
  year={2023}
}

@inproceedings{liao2024gentkg,
  title={Gentkg: Generative forecasting on temporal knowledge graph with large language models},
  author={Liao, Ruotong and Jia, Xu and Li, Yangzhe and Ma, Yunpu and Tresp, Volker},
  booktitle={Findings of the Association for Computational Linguistics: NAACL},
  pages={4303--4317},
  year={2024}
}

@article{wang2024large,
  title={Large language models-guided dynamic adaptation for temporal knowledge graph reasoning},
  author={Wang, Jiapu and Sun, Kai and Luo, Linhao and Wei, Wei and Hu, Yongli and Liew, Alan W and Pan, Shirui and Yin, Baocai},
  journal={Advances in Neural Information Processing Systems},
  pages={8384--8410},
  year={2024}
}

@article{bai2025g2s,
  title={G2S: A General-to-Specific Learning Framework for Temporal Knowledge Graph Forecasting with Large Language Models},
  author={Bai, Long and Li, Zixuan and Jin, Xiaolong and Guo, Jiafeng and Cheng, Xueqi and Chua, Tat-Seng},
  journal={Findings of the Association for Computational Linguistics: ACL},
  year={2025}
}

@inproceedings{tang2025anre,
  title={AnRe: Analogical Replay for Temporal Knowledge Graph Forecasting},
  author={Tang, Guo and Chu, Zheng and Zheng, Wenxiang and Xiang, Junjia and Li, Yizhuo and Zhang, Weihao and Liu, Ming and Qin, Bing},
  booktitle={Proceedings of the Annual Meeting of the Association for Computational Linguistics},
  pages={4632--4650},
  year={2025}
}

@inproceedings{jin2020recurrent,
  title={Recurrent event network: Autoregressive structure inference over temporal knowledge graphs},
  author={Jin, Woojeong and Qu, Meng and Jin, Xisen and Ren, Xiang},
  booktitle={Proceedings of the 2020 conference on empirical methods in natural language processing (EMNLP)},
  pages={6669--6683},
  year={2020}
}

@inproceedings{xia2024chain,
  title={Chain-of-history reasoning for temporal knowledge graph forecasting},
  author={Xia, Yuwei and Wang, Ding and Liu, Qiang and Wang, Liang and Wu, Shu and Zhang, Xiao-Yu},
  booktitle={Findings of the Association for Computational Linguistics: ACL},
  pages={16144--16159},
  year={2024}
}

@inproceedings{dong2023adaptive,
  title={Adaptive path-memory network for temporal knowledge graph reasoning},
  author={Dong, Hao and Ning, Zhiyuan and Wang, Pengyang and Qiao, Ziyue and Wang, Pengfei and Zhou, Yuanchun and Fu, Yanjie},
  booktitle={Proceedings of the International Joint Conference on Artificial Intelligence},
  pages={2086--2094},
  year={2023}
}

@article{zhang2024temporal,
  title={Temporal knowledge graph reasoning with dynamic memory enhancement},
  author={Zhang, Fuwei and Zhang, Zhao and Zhuang, Fuzhen and Zhao, Yu and Wang, Deqing and Zheng, Hongwei},
  journal={IEEE Transactions on Knowledge and Data Engineering},
  volume={36},
  number={11},
  pages={7115--7128},
  year={2024},
  publisher={IEEE}
}

@inproceedings{pezeshki2021gradient,
  title={Gradient starvation: A learning proclivity in neural networks},
  author={Pezeshki, Mohammad and Kaba, Oumar and Bengio, Yoshua and Courville, Aaron C and Precup, Doina and Lajoie, Guillaume},
  booktitle={Advances in Neural Information Processing Systems},
  pages={1256--1272},
  year={2021}
}

@inproceedings{scimecashortcut,
  title={Which Shortcut Cues Will DNNs Choose? A Study from the Parameter-Space Perspective},
  author={Scimeca, Luca and Oh, Seong Joon and Chun, Sanghyuk and Poli, Michael and Yun, Sangdoo},
  booktitle={International Conference on Learning Representations},
  year={2022}
}

@inproceedings{deng2020dynamic,
  title={Dynamic knowledge graph based multi-event forecasting},
  author={Deng, Songgaojun and Rangwala, Huzefa and Ning, Yue},
  booktitle={Proceedings of the 26th ACM SIGKDD international conference on knowledge discovery \& data mining},
  pages={1585--1595},
  year={2020}
}

@article{zhang2022temporal,
  title={Temporal knowledge graph representation learning with local and global evolutions},
  author={Zhang, Jiasheng and Liang, Shuang and Sheng, Yongpan and Shao, Jie},
  journal={Knowledge-Based Systems},
  volume={251},
  pages={109234},
  year={2022},
  publisher={Elsevier}
}

@article{tang2024dhyper,
  title={DHyper: A recurrent dual hypergraph neural network for event prediction in temporal knowledge graphs},
  author={Tang, Xing and Chen, Ling and Shi, Hongyu and Lyu, Dandan},
  journal={ACM Transactions on Information Systems},
  volume={42},
  number={5},
  pages={1--23},
  year={2024},
  publisher={ACM New York, NY}
}

@article{tang2023gtrl,
  title={GTRL: An entity group-aware temporal knowledge graph representation learning method},
  author={Tang, Xing and Chen, Ling},
  journal={IEEE Transactions on Knowledge and Data Engineering},
  volume={36},
  number={9},
  pages={4707--4721},
  year={2023},
  publisher={IEEE}
}

@article{chen2024decrl,
  title={DECRL: A deep evolutionary clustering jointed temporal knowledge graph representation learning approach},
  author={Chen, Qian and Chen, Ling},
  journal={Advances in Neural Information Processing Systems},
  volume={37},
  pages={55204--55227},
  year={2024}
}

@inproceedings{chenbeyond,
  title={Beyond Entity Correlations: Disentangling Event Causal Puzzles in Temporal Knowledge Graphs},
  author={Chen, Qian and Zhang, Jinyu and Chen, Ling},
  booktitle={International Conference on Learning Representations},
  year={2026}
}

@inproceedings{gastinger2022evaluation,
  title={On the evaluation of methods for temporal knowledge graph forecasting},
  author={Gastinger, Julia and Sztyler, Timo and Sharma, Lokesh and Schuelke, Anett},
  booktitle={NeurIPS Temporal Graph Learning Workshop},
  year={2022}
}
\bibliographystyle{plain}

\appendix
\section{Related Work}\label{appe:related}
\subsection{Differences from Local-Based Methods}
Our method differs fundamentally from prior approaches such as HGLS \cite{zhang2023learning_2}, LogCL \cite{chen2024local}, and HisRES \cite{zhang2025historically}. These methods typically exploit recent snapshots as local information, which is substantially different from the evolutionary dynamics modeled in our framework.
It is important to distinguish local-based modeling from evolutionary dynamics. Local-based modeling captures temporal context from recent snapshots, often via GNNs and recurrent modules, and serves as a general-purpose representation learning component. In contrast, evolutionary dynamics are modeled from a predictive perspective, focusing on task-relevant temporal transitions of entities and relations. 
Moreover, evolutionary dynamics capture logically related events across multiple timestamps, resulting in substantially broader temporal coverage than local-based modeling.
In other words, local-based modeling provides general contextual information, whereas evolutionary dynamics capture predictive signals. They differ fundamentally in their role, modeling level, and objective.

\subsection{Interpolation Reasoning}
Interpolation reasoning focuses on inferring facts at observed past timestamps. Many existing methods aim to capture temporal dependencies through diverse modeling strategies \cite{leblay2018deriving,lacroixtensor,messner2022temporal,fang2024transformer,fang2024arbitrary,liu2025terdy,ying2024simple,li2023teast}. For example, TTransE \cite{leblay2018deriving} extends TransE \cite{bordes2013translating} by modeling time and relations as translation operations, while TComplex \cite{lacroixtensor} captures temporal dependencies while supporting atemporal facts. BoxTE \cite{messner2022temporal} incorporates temporal information via relation-specific transition matrices, and ECEformer \cite{fang2024transformer} models temporal patterns through evolutionary event chains. 
Recently, several approaches have explored modeling TKGs in the complex-valued space.
TCompoundE \cite{ying2024simple} models temporal KGs via relation- and time-specific translation and scaling transformations.
TeRDy \cite{liu2025terdy} decomposes relation embeddings in the frequency domain to capture long- and short-term dynamics.

\subsection{Extrapolation Reasoning}
Beyond embedding-based methods, extrapolation approaches can also be categorized into rule-based and LLM-based methods.

\textbf{Rule-Based TKGR.}
TLogic \cite{liu2022tlogic} extracts temporal logical rules via non-increasing temporal random walks, estimates rule confidence, and derives predictions through noisy-OR aggregation. 
Subsequent methods extend this framework. 
TR-Rules \cite{li2023tr} mines temporal acyclic rules and introduces window-based confidence to mitigate temporal redundancy, while TempValid \cite{huang2024confidence} further incorporates temporal decay into rule confidence estimation.
Unlike TLogic and its variants, INFER \cite{li2025infer} incorporates temporal validity, fact frequency, and embedding information for extrapolation reasoning, while DaeMon \cite{dong2023adaptive} learns continuous and implicit path representations through neural networks without explicitly constructing logical rules.

\textbf{LLM-Based TKGR.}
Recently, several studies explore leveraging the inductive and reasoning capabilities of large language models (LLMs) for TKGR. 
ICL \cite{lee2023temporal} applies in-context learning by prompting LLMs with historical inputs, while GenTKG \cite{liao2024gentkg} and LLM-DA \cite{wang2024large} incorporate temporal logical rules to retrieve and refine historical sequences.
CoH \cite{xia2024chain} leverages LLMs to explore high-order historical chains and combines their semantic reasoning with structural predictions from graph models.
G2S \cite{bai2025g2s} adopts a two-stage framework that combines ID-based quadruples with textual representations, and AnRe \cite{tang2025anre} further enhances temporal reasoning by guiding LLMs with few-shot analogical examples.

We further compare our method with rule-based and LLM-based approaches. As shown in Table~\ref{tab:main-results-extra} on ICEWS14s, ICEWS18, and ICEWS05-15, our method still achieves the best performance. 
It is worth noting that most of these methods are evaluated on these datasets instead of GDELT, likely due to the substantially larger scale of GDELT.
\begin{table*}[t]
  \centering
  \small
  \setlength{\tabcolsep}{4pt}
  \caption{Performance of TKG entity extrapolation on ICEWS14s, ICEWS18, and ICEWS05-15. The time-aware filtered MRR, Hits@1, and Hits@10 metrics are multiplied by 100. The best results are marked in \textbf{bold}.}
  \label{tab:main-results-extra}
  \resizebox{0.92\textwidth}{!}{
  \begin{tabular}{c l *{3}{c} *{3}{c} *{3}{c}}
    \toprule
    \multirow{2}{*}{Type} & \multirow{2}{*}{Model}
    & \multicolumn{3}{c}{ICEWS14s}
    & \multicolumn{3}{c}{ICEWS18}
    & \multicolumn{3}{c}{ICEWS05-15} \\
    \cmidrule(lr){3-5}\cmidrule(lr){6-8}\cmidrule(lr){9-11}
    & & MRR & Hits@1 & Hits@10
      & MRR & Hits@1 & Hits@10
      & MRR & Hits@1 & Hits@10 \\
    \midrule

    \multirow{3}{*}{Path-based}
    & TLogic   & 43.04 & 33.56 & 61.23 & 29.82 & 20.54 & 48.53 & 46.97 & 36.21 & 67.43 \\
    & TR-Rules & 43.32 & 33.96 & 61.17 & 30.41 & 21.10 & 48.92 & 47.64 & 37.06 & 67.57 \\
    & TempValid & 45.78 & 35.50 & 65.06 & 33.50 & 23.91 & 52.33 & 50.31 & 39.46 & 70.55 \\
    & INFER    & 44.46 & 35.03 & 62.31 & 32.22 & 22.39 & 51.52 & 48.73 & 38.32 & 68.48 \\

    \midrule

    \multirow{3}{*}{LLM-based}
    & GenTKG & --    & 36.85 & 53.50 & --    & 24.25 & 42.10 & --    & --    & --    \\
    & LLM-DA & 47.10 & 36.90 & 67.10 & --    & --    & --    & 52.10 & 41.60 & 72.80 \\
    & MESH   & 44.36 & --    & 64.21 & 33.96 & --    & 54.12 & 48.66 & --    & 68.57 \\
    & AnRe   & 47.40 & 36.90 & 65.70 & 35.50 & 26.00 & 56.70 & 50.90 & 39.10 & 69.60 \\
    \midrule

    Dual-views
    & CHE-TKG
    & \textbf{51.91} & \textbf{41.37} & \textbf{72.17}
    & \textbf{38.77} & \textbf{27.58} & \textbf{60.95}
    & \textbf{60.37} & \textbf{50.04} & \textbf{79.44} \\
    
    \bottomrule
  \end{tabular}
  }
\end{table*}

\section{List of Notations}\label{abla:nota}
We summarize the main notations used in this paper in Table~\ref{tab:nota}.
\begin{table}[h]
    \centering
    \caption{Summary of Important Notations}
    \small
    \begin{tabular}{l l}
    \toprule
    \textbf{Notation} & \textbf{Description} \\
    \midrule
    $\mathcal{E}, \mathcal{R}, \mathcal{T}, \mathcal{F}$ 
    & Entity, relation, timestamp, and temporal fact set in a TKG. \\
  
    $\mathcal{G}_t$ 
    & Temporal knowledge graph snapshot at timestamp $t$. \\
  
    $\mathbf{e}_{s,t}, \mathbf{r}_{t}, \mathbf{e}_{o,t}$ 
    & Subject, relation, and object embeddings at timestamp $t$. \\
  
    $\mathbf{E}_t, \mathbf{R}_t$ 
    & Entity and relation embedding matrices at timestamp $t$. \\
  
    $\mathcal{G}^{D}, \mathcal{G}^{E}$ 
    & Evolutionary dynamics and historical evidence graph. \\
  
    $\mathbf{e}^{D}, \mathbf{e}^{E}$ 
    & Entity embeddings of both graphs. \\
  
    $\mathbf{r}^{D}, \mathbf{r}^{E}$ 
    & Relation embeddings of both graphs. \\
  
    $\mathbf{E}^{D}, \mathbf{E}^{E}$ 
    & Entity embedding matrices of both graphs. \\
  
    $\mathbf{R}^{D}, \mathbf{R}^{E}$ 
    & Relation embedding matrices of both graphs. \\
  
    $d$ 
    & Embedding dimensionality. \\
  
    $L$
    & Number of historical snapshots used for modeling. \\
  
    $\mathcal{Q}_{t_q}$
    & Set of prediction queries associated with timestamp $t_q$. \\
    \bottomrule
    \end{tabular}
    \label{tab:nota}
  \end{table}

\section{Analysis of Rule-Based Retrieval}\label{abla:rule}

In this section, we analyze temporal logical rule-based retrieval used to construct the evolutionary dynamics graph. The rules are learned using the standard TLogic \cite{liu2022tlogic} procedure, relying solely on training data.
In practice, we restrict rule retrieval to one-hop rules for efficiency and reliability, and skip rule-based retrieval for relations that are not covered by any mined rule. We report the learned rules on the ICEWS14s, ICEWS18, ICEWS05-15, and GDELT datasets, along with the coverage of retrieved evolutionary dynamics facts on the test set, as shown in Table~\ref{tab:rule_stats}.

\textbf{Rule-Cov.} denotes the fraction of test queries whose query relation is covered by at least one mined temporal rule. 
\textbf{Retrieved-Cov.} denotes the fraction of test queries for which the mined rules retrieve at least one historical fact. 
\textbf{Avg. Retrieved Facts} denotes the average number of retrieved facts per query.
As shown in the table, the majority of test queries are associated with at least one retrieved evolutionary dynamics fact. For queries with zero retrieved facts, the model cannot rely on explicit evolutionary dynamics and instead makes predictions based on latent entity and relation representations. Furthermore, restricting to one-hop rules still allows the model to effectively capture temporal evolution patterns of entities.
\begin{table}[h]
  \centering
  \setlength{\tabcolsep}{4pt}
  \renewcommand{\arraystretch}{1.1}
  \caption{
  Statistics of rule-based retrieval on different datasets.
  }
  \label{tab:rule_stats}
  
  \begin{tabular}{lccc}
  \toprule
  Dataset 
  & Rule-Cov. 
  & Retrieved-Cov. 
  & Avg. Retrieved Facts \\
  \midrule
  
  ICEWS14s   & 100.00 & 95.849 & 8.7331 \\
  ICEWS18    & 99.976 & 97.321 & 7.4332 \\
  ICEWS05-15 & 99.985 & 98.911 & 9.5729 \\
  GDELT      & 99.999 & 99.780 & 7.9673 \\
  
  \bottomrule
  \end{tabular}
  \end{table}

\section{Proofs}\label{proof}
In this section, we provide a proof of Theorem~\ref{theo:error}.

\textit{Proof.} Assume $(\gamma_D,\gamma_E)$ follows a joint Gaussian distribution with identical marginals:
\begin{equation}
\gamma_D, \gamma_E \sim \mathcal{N}(\mu,\sigma^2),
\end{equation}
and correlation coefficient
\begin{equation}
\operatorname{Corr}(\gamma_D,\gamma_E)=\rho<1.
\end{equation}

Define the fused margin:
\begin{equation}
\gamma_{D+E} = \gamma_D + \gamma_E.
\end{equation}
Since a linear combination of jointly Gaussian variables remains Gaussian, we have
\begin{equation}
\gamma_{D+E} \sim \mathcal{N}\left(\mathbb{E}[\gamma_{D+E}], \operatorname{Var}(\gamma_{D+E})\right).
\end{equation}

The expectation is
\begin{equation}
\mathbb{E}[\gamma_{D+E}] = \mathbb{E}[\gamma_D] + \mathbb{E}[\gamma_E] = 2\mu.
\end{equation}

The variance is
\begin{equation}
\operatorname{Var}(\gamma_{D+E}) 
= \operatorname{Var}(\gamma_D) + \operatorname{Var}(\gamma_E) + 2\operatorname{Cov}(\gamma_D,\gamma_E).
\end{equation}

Using $\operatorname{Var}(\gamma_D)=\operatorname{Var}(\gamma_E)=\sigma^2$ and
\begin{equation}
\operatorname{Cov}(\gamma_D,\gamma_E) = \rho\sigma^2,
\end{equation}
we obtain
\begin{equation}
\operatorname{Var}(\gamma_{D+E}) = 2\sigma^2(1+\rho).
\end{equation}

Thus,
\begin{equation}
\gamma_{D+E} \sim \mathcal{N}(2\mu,\, 2\sigma^2(1+\rho)).
\end{equation}

For a single view, the error probability is
\begin{equation}
P(\gamma_D \le 0) 
= P\!\left(\frac{\gamma_D-\mu}{\sigma} \le -\frac{\mu}{\sigma}\right)
= \Phi\!\left(-\frac{\mu}{\sigma}\right),
\end{equation}
where $\Phi(\cdot)$ is the standard normal cumulative distribution function (CDF). Similarly,
\begin{equation}
P(\gamma_E \le 0) = \Phi\!\left(-\frac{\mu}{\sigma}\right).
\end{equation}

For the fused margin,
\begin{equation}
P(\gamma_{D+E} \le 0)
= P\!\left(
\frac{\gamma_{D+E}-2\mu}{\sqrt{2\sigma^2(1+\rho)}}
\le -\frac{2\mu}{\sqrt{2\sigma^2(1+\rho)}}
\right),
\end{equation}
which gives
\begin{equation}
P(\gamma_{D+E} \le 0)
=
\Phi\!\left(
-\frac{2\mu}{\sqrt{2\sigma^2(1+\rho)}}
\right)
=
\Phi\!\left(
-\frac{\sqrt{2}\mu}{\sigma\sqrt{1+\rho}}
\right).
\end{equation}

Since $\rho<1$, we have $1+\rho<2$, which implies
\begin{equation}
\frac{\sqrt{2}\mu}{\sigma\sqrt{1+\rho}} > \frac{\mu}{\sigma}.
\end{equation}
Multiplying both sides by $-1$ reverses the inequality:
\begin{equation}
-\frac{\sqrt{2}\mu}{\sigma\sqrt{1+\rho}} 
< 
-\frac{\mu}{\sigma}.
\end{equation}

By the monotonicity of $\Phi$, it follows that
\begin{equation}
P(\gamma_{D+E} \le 0) < P(\gamma_D \le 0).
\end{equation}
Since the two views are symmetric, we also have
\begin{equation}
P(\gamma_D \le 0) = P(\gamma_E \le 0).
\end{equation}
Thus,
\begin{equation}
P(\gamma_{D+E} \le 0) < P(\gamma_D \le 0) = P(\gamma_E \le 0).
\end{equation}
\qed

\begin{remark}\label{abla:remark}
Suppose
\begin{equation}
\gamma_D \sim \mathcal{N}(\mu_D,\sigma_D^2), \quad
\gamma_E \sim \mathcal{N}(\mu_E,\sigma_E^2),
\end{equation}
with correlation $\rho$. Then
\begin{equation}
\gamma_{D+E} = \gamma_D + \gamma_E
\sim
\mathcal{N}(\mu_D+\mu_E,\; \sigma_D^2+\sigma_E^2+2\rho\sigma_D\sigma_E).
\end{equation}
The corresponding error probabilities are
\begin{equation}
P(\gamma_D \le 0) = \Phi\!\left(-\frac{\mu_D}{\sigma_D}\right), \quad
P(\gamma_E \le 0) = \Phi\!\left(-\frac{\mu_E}{\sigma_E}\right),
\end{equation}
and
\begin{equation}
P(\gamma_{D+E} \le 0)
=
\Phi\!\left(
-\frac{\mu_D+\mu_E}{\sqrt{\sigma_D^2+\sigma_E^2+2\rho\sigma_D\sigma_E}}
\right).
\end{equation}

Define the signal-to-noise ratios as
\begin{equation}
\mathrm{SNR}_D=\frac{\mu_D}{\sigma_D}, 
\quad
\mathrm{SNR}_E=\frac{\mu_E}{\sigma_E},
\end{equation}
and
\begin{equation}
\mathrm{SNR}_{D+E}
=
\frac{\mu_D+\mu_E}
{\sqrt{\sigma_D^2+\sigma_E^2+2\rho\sigma_D\sigma_E}}.
\end{equation}
Since $\Phi(\cdot)$ is monotonically increasing, if
\begin{equation}
\mathrm{SNR}_{D+E} > \max\{\mathrm{SNR}_D,\mathrm{SNR}_E\},
\end{equation}
then
\begin{equation}
P(\gamma_{D+E}\le 0) < P(\gamma_D\le 0),
\qquad
P(\gamma_{D+E}\le 0) < P(\gamma_E\le 0).
\end{equation}
Thus, in the general Gaussian case, dual-view fusion reduces the ranking error probability when the fused margin achieves a higher signal-to-noise ratio than either single-view margin.
\end{remark}

\section{Extra Experiments}\label{appe:exper}

\begin{table*}[t!]
  \centering
  \caption{Statistics of datasets}
  \resizebox{1.0\textwidth}{!}{
  \begin{tabular}{lccccccc}
    \toprule
  Dataset & Entities $|\mathcal{E}|$ & Relations $|\mathcal{R}|$ & Training Facts & Validation Facts & Testing Facts & Timestamps $|\mathcal{T}|$ & Time granularity \\
  \hline
  ICEWS14s   & 7,128  & 230 & 74,845  & 8,514  & 7,371  & 365 & 24 hours \\
  ICEWS18    & 23,033 & 256 & 373,018 & 45,995 & 49,545 & 304 & 24 hours \\
  ICEWS05-15 & 10,094 & 251 & 368,868 & 46,302 & 46,159 & 4017 & 24 hours \\
  GDELT      & 7,691  & 240 & 1,734,399 & 238,765 & 305,241 & 2976 & 15 mins \\
  \bottomrule
  \end{tabular}
  \label{tab:dataset_statistics_horizontal}
  }
\end{table*}

\subsection{Experimental Settings}\label{appe:setting}
\textbf{Datasets.}
We evaluate our method on four widely used benchmark datasets for TKG extrapolation: ICEWS14s, ICEWS18, ICEWS05-15, and GDELT.
The first three datasets are derived from the Integrated Crisis Early Warning System (ICEWS) \cite{Boschee2015ICEWS}, while GDELT is collected from the Global Database of Events, Language, and Tone \cite{leetaru2013gdelt}.
Following prior work, all datasets are split chronologically into training,
validation, and test sets with proportions of 80\%, 10\%, and 10\%, respectively.
Detailed statistics of the datasets are reported in Table~\ref{tab:dataset_statistics_horizontal}.
Following recent extrapolation-based TKG studies \cite{zhang2025historically,chen2024local,zhang2023learning}, we do not report results on datasets with coarse temporal granularity (e.g., WIKI and YAGO), which are less suitable for modeling fine-grained temporal dynamics.

\textbf{Evaluation Protocols.}
Following prior work \cite{li2021temporal,chen2024local,zhang2025historically,gastinger2022evaluation}, we evaluate the entity extrapolation task using time-aware filtered metrics, including Mean Reciprocal Rank (MRR) and Hits@$\{1,10\}$, and report all results in percentage form. Higher values of MRR and Hits@$k$ indicate better performance, which can be formulated as:  
\begin{equation}\label{equ:MRR}
  MRR = \frac{1}{2*|test|} \sum_{f=(e_s,r,e_o,\tau) \in test} (\frac{1}{k_{f,o}} + \frac{1}{k_{f,s}}),
\end{equation}
\begin{equation}\label{equ:Hits}
  Hits@k = \frac{1}{2*|test|} \sum_{f=(e_s,r,e_o,\tau) \in test} (\mathbbm{1}_{k_{f,o} \leq k} + \mathbbm{1}_{k_{f,s} \leq k}),
\end{equation}
where $k_{f,s}$ and $k_{f,o}$ denote the ranking of subject entity $e_s$ and object entity $e_o$, respectively. $\mathbbm{1}_{condition}$ is 1 if $condition$ holds and 0 otherwise. 

\textbf{Compared Baselines.}
\begin{itemize}
  \item \textbf{RE-NET} \cite{jin2020recurrent} models event sequences in an autoregressive manner, using RNNs to encode historical events and neighbor aggregation to capture structural information.
  
  \item \textbf{RE-GCN} \cite{li2021temporal} employs relation-aware GCNs to model structural dependencies at each timestamp, combined with recurrent units for temporal evolution.
  
  \item \textbf{RETIA} \cite{liu2023retia} constructs dual hyper-relational subgraphs to jointly aggregate entity and relation information with bidirectional interactions.
  
  \item \textbf{RPC} \cite{liang2023learn} models relational correlations and captures temporal periodic patterns via relation and periodic correspondence modules.
  
  \item \textbf{TANGO} \cite{han2021learning} models continuous-time dynamics using neural ODEs with relational graph convolutions.
  
  \item \textbf{HiSMatch} \cite{li2022hismatch} adopts a dual-encoder framework to match query-related and candidate-related historical structures.
  
  \item \textbf{CEN} \cite{li2022complex} captures evolutionary patterns of different lengths via CNNs with curriculum learning.
  
  \item \textbf{HTCCN} \cite{chen2024htccn} combines causal convolutions with Hawkes processes to model temporal dependencies and event intensities.
  
  \item \textbf{HERLN} \cite{du2025hawkes} integrates Hawkes processes with structural properties such as community and long-tail distributions.
  
  \item \textbf{TiRGN} \cite{li2022tirgn} jointly models local temporal dependencies, global historical patterns, and periodicity via hybrid encoders.

  \item \textbf{LogCL} \cite{chen2024local} leverages contrastive learning to integrate local and global historical information with query-aware attention.

  \item \textbf{DyMemR} \cite{zhang2024temporal} employs a dynamic memory module to store and retrieve important historical facts.

  \item \textbf{HisRES} \cite{zhang2025historically} models multi-granularity temporal interactions and selectively aggregates query-relevant historical information.
  \end{itemize}
Results for HGLS and L$^2$TKG are omitted due to differences in experimental settings.
All baseline results are taken from prior work and are reported on the same datasets and under the same filtering protocol for fair comparison.
Specifically, the results of RE-NET, RE-GCN, RETIA, RPC, CEN, TiRGN, LogCL and HisRES are taken from HisRES \cite{zhang2025historically}. The result of TANGO is taken from LogCL \cite{chen2024local}. The results of HisMatch, HTCCN, HERLN, and DyMemR are taken from their original papers. 

\paragraph{\textbf{Implementation Settings.}}
All experiments are conducted on an NVIDIA RTX P40 GPU.
The embedding dimension is set to 200, with a dropout rate of 0.2 for all layers.
Models are trained for up to 50 epochs using early stopping based on validation MRR.
We use the Adam optimizer with a learning rate of 0.001 and a weight decay of $1\times10^{-5}$.
The historical length in the spatio-temporal initialization module is fixed to 3.
The GNN encoder consists of two layers.
The relation prediction coefficient $\alpha$ is set to 0.7.
Hyperparameters, including the contrastive alignment weight $\mu$, temperature $\gamma$, number of GAT layers, and rule graph length, are tuned via grid search on the validation set.
For ICEWS14s, ICEWS18, ICEWS05-15, and GDELT, $\mu$ is set to 0.2, 0.01, 0.15, and 0.3, $\gamma$ to 0.3, 0.03, 0.2, and 0.25, the number of GAT layers to 3, 2, 2, and 2, and the rule graph length to 10, 8, 10, and 8, respectively.
Following prior work \cite{li2021temporal, li2022tirgn, liang2023learn, chen2024local, zhang2025historically}, static KG information is integrated into the ICEWS datasets. We further apply a two-phase forward propagation process to both graph encoders \cite{chen2024local, zhang2025historically}, handling raw and inverse query sets separately. As a result, historical evidence graphs and evolutionary dynamics graphs are constructed independently for each query type, preventing potential data leakage. 

\subsection{Ablation Study}

\begin{table}[t]
  \centering
  \small
  \setlength{\tabcolsep}{4pt}
  \caption{Ablation results on ICEWS14s and ICEWS18.}
  \label{tab:ablation2}

  \begin{tabular}{lcccccc}
  \toprule
  \multirow{2}{*}{Variant} 
  & \multicolumn{3}{c}{ICEWS14s} 
  & \multicolumn{3}{c}{ICEWS18} \\
  \cmidrule(lr){2-4}\cmidrule(lr){5-7}
  & MRR & Hits@1 & Hits@10 
  & MRR & Hits@1 & Hits@10 \\
  \midrule

  CHE-TKG
  & \textbf{51.91} & \textbf{41.37} & \textbf{72.17}
  & \textbf{38.77} & \textbf{27.58} & \textbf{60.95} \\
  \midrule

  -w/o-(CoA \& ReD)
  & 51.37 & 40.91 & 71.50
  & 38.42 & 27.15 & 60.80 \\
  -w/o-$\mathcal{G}^{E}$
  & 50.29 & 39.75 & 70.55
  & 36.98 & 25.81 & 58.92 \\
  -w/Simp
  & 44.91 & 34.17 & 66.32
  & 33.82 & 23.05 & 55.10 \\
  -w/EnD
  & 51.76 & 41.24 & 71.62
  & 38.39 & 27.14 & 60.78 \\
  -w/o-TE
  & 51.35 & 40.77 & 71.68
  & 38.55 & 27.26 & 61.09 \\
  -w/cos
  & 51.38 & 41.01 & 71.51
  & 38.60 & 27.32 & 60.94 \\
  \midrule

  -w/o-init
  & 44.82 & 34.51 & 64.61
  & 34.78 & 24.00 & 56.18 \\
  -w/SiD
  & 51.04 & 40.56 & 71.22
  & 38.13 & 26.76 & 60.73 \\
  -w/WF
  & 51.46 & 40.87 & 71.73
  & 38.55 & 27.20 & 61.06 \\

  \bottomrule
  \end{tabular}
\end{table}
In this subsection, we conduct additional ablation studies to further validate the effectiveness of our design. The results are summarized in Table~\ref{tab:ablation2}, which reports multiple variants: the full CHE-TKG; without both contrastive alignment and relation decomposition while retaining both graphs ({-w/o-(CoA \& ReD)}); without the historical evidence graph ({-w/o-}$\mathcal{G}^{E}$); replacing the rule-based retrieval strategy with a simple heuristic for constructing $\mathcal{G}^{D}$ ({-w/Simp}); introducing entity decomposition analogous to relation decomposition ({-w/EnD}); removing the time encoder in the evolutionary dynamics encoder ({-w/o-TE}); replacing the proposed time encoder with cosine encoding ({-w/cos}); removing spatio-temporal initialization ({-w/o-init}); using a single decoder ({-w/SiD}); and applying adaptive weighted score fusion ({-w/WF}).

As shown in the table, we evaluate a rule-free heuristic for constructing $\mathcal{G}^{D}$ ({-w/Simp}) by selecting the  recent same-subject facts involving the query subject. Compared with {-w/o-$\mathcal{G}^{E}$}, this naive retrieval strategy leads to a substantial performance drop, indicating that simple temporal retrieval is insufficient for capturing informative evolutionary dynamics.
Introducing entity decomposition ({-w/EnD}) results in slight performance degradation, suggesting that relation semantics benefit more from explicit decomposition, while entity decomposition may disrupt shared contextual information.
We further analyze temporal encoding strategies in the evolutionary dynamics encoder. Removing the time encoder ({-w/o-TE}) significantly degrades performance, and replacing it with cosine encoding ({-w/cos}) yields inferior results, demonstrating the importance of explicitly modeling temporal relevance for extrapolation.
Removing the spatio-temporal initialization ({-w/o-init}) also leads to a significant performance drop. This suggests that both historical evidence and evolutionary dynamics lack sufficient general contextual information, and that spatio-temporal initialization plays a crucial role in TKG reasoning.
Compared with {-w/o-(CoA \& ReD)}, using a single decoder ({-w/SiD}) results in degraded performance, indicating that dual decoders enable more effective integration of conditionally non-redundant information at the prediction stage.
Finally, compared with the full CHE-TKG, applying adaptive weighted score fusion ({-w/WF}) does not improve performance, and even leads to slightly inferior results. This may be due to the increased model complexity, which introduces additional optimization difficulty without effectively enhancing predictive signals.

\subsection{Sensitivity Analysis}\label{sec:sensitive}

We experiment on ICEWS14s and ICEWS18 to analyze the effects of key hyperparameters in CHE-TKG, including the upper bound $N$ on the number of historical facts for constructing the evolutionary dynamics graph and the contrastive alignment weight $\mu$. Fig.~\ref{fig:overall_length} shows the impact of $N$ on performance. The best results are achieved at $N=10$ on ICEWS14s and $N=8$ on ICEWS18. 
When $N$ is too small, limited historical information hinders accurate prediction, while overly large $N$ introduces noisy histories and increases computational cost. The smaller optimal $N$ on ICEWS18 can be attributed to its denser event distribution and more stable evolutionary patterns.
Fig.~\ref{fig:overall_mu} illustrates the effect of $\mu$, which controls contrastive alignment strength. Optimal performance is obtained at $\mu=0.2$ on ICEWS14s and $\mu=0.01$ on ICEWS18, indicating that the optimal alignment strength is dataset-dependent. 
In particular, ICEWS14s benefits from stronger alignment due to sparser events and less stable dynamics, whereas ICEWS18 requires weaker alignment given its denser and more stable temporal structure.

\begin{figure}[h]
  \centering
  \subcaptionbox{ICEWS14s\label{fig:left_length}}[0.48\linewidth]{%
    \includegraphics[width=\linewidth]{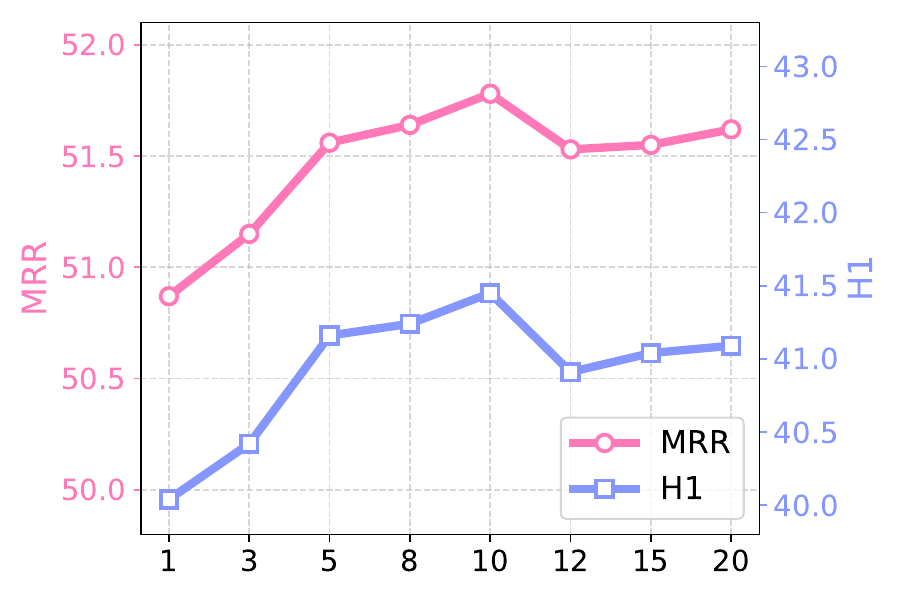}
  }\hspace{0.0\linewidth}
  \subcaptionbox{ICEWS18\label{fig:right_length}}[0.48\linewidth]{%
    \includegraphics[width=\linewidth]{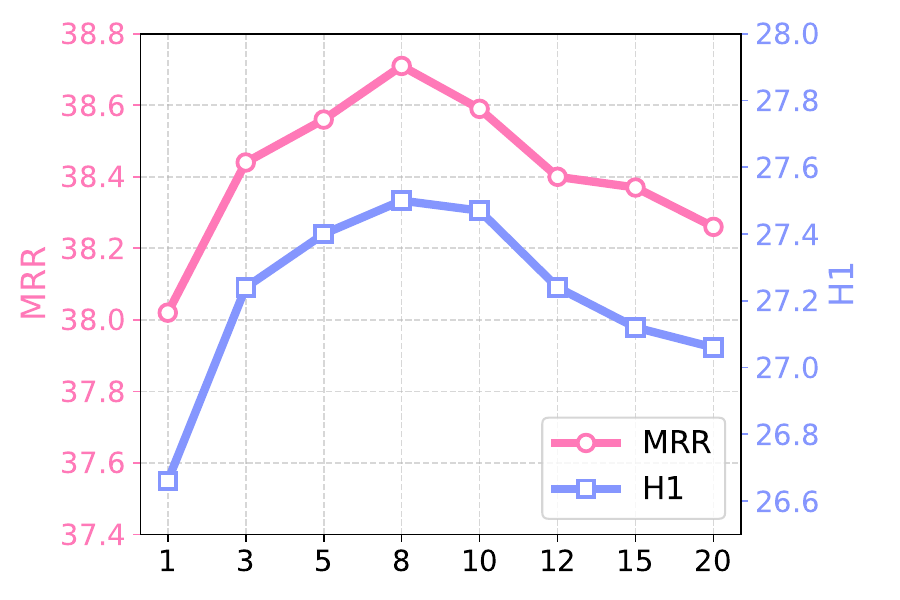}
  }
  \vspace{-5pt}
  \caption{Effect of the upper bound $N$ on the evolutionary dynamics graph.}
  \label{fig:overall_length}
\end{figure}

\begin{figure}[h]
  \centering
  \subcaptionbox{ICEWS14s\label{fig:left_mu}}[0.48\linewidth]{%
    \includegraphics[width=\linewidth]{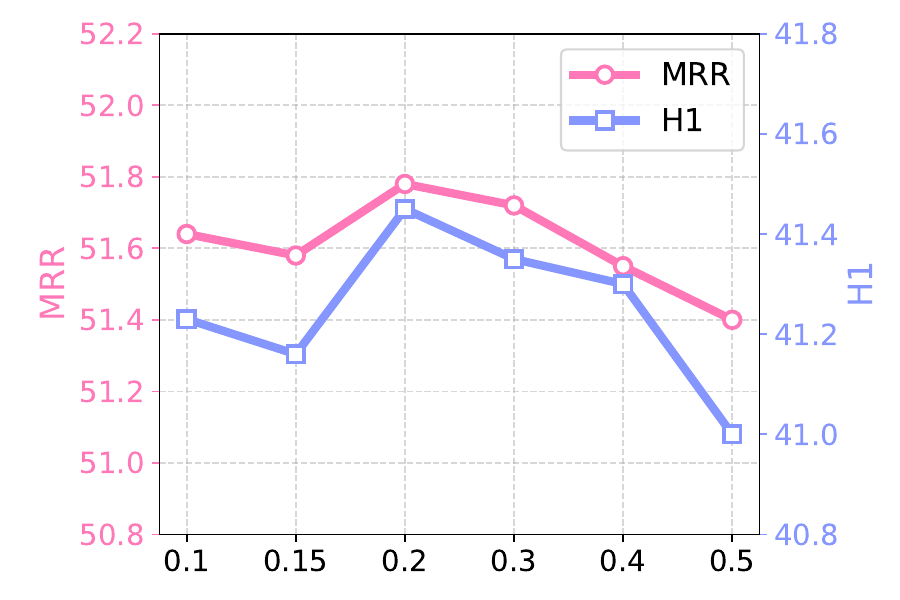}
  }\hspace{0.0\linewidth}
  \subcaptionbox{ICEWS18\label{fig:right_mu}}[0.48\linewidth]{%
    \includegraphics[width=\linewidth]{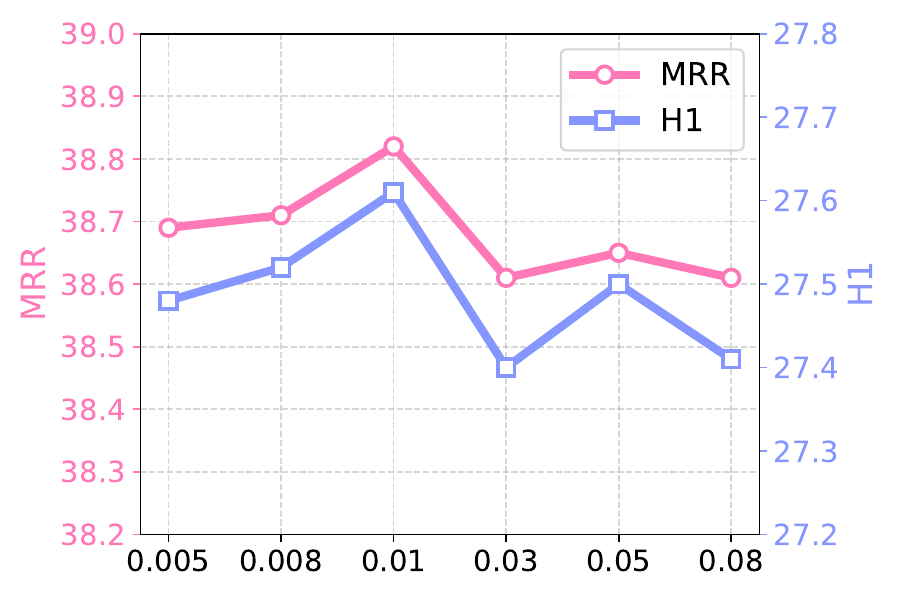}
  }
  \vspace{-5pt}
  \caption{Effect of the contrastive alignment weight $\mu$.}
  \label{fig:overall_mu}
\end{figure}

\subsection{Analysis of Complementary Predictive Signals}

In this subsection, we further report the signal-to-noise ratio (SNR) across different views, as shown in Table~\ref{abla:statistics}. 
Specifically, on ICEWS14s, ICEWS18, and ICEWS05-15, we observe that 
$
\mathrm{SNR}_{D+E} > \max\{\mathrm{SNR}_D, \mathrm{SNR}_E\},
$
which is consistent with the condition proposed in Remark~\ref{abla:remark}. 

On the GDELT dataset, we find that $\mathrm{SNR}_{D+E} > \mathrm{SNR}_E$ while $\mathrm{SNR}_{D+E} \approx \mathrm{SNR}_D$. 
This may be attributed to the large scale of GDELT, where the margin distribution deviates from the Gaussian assumption. 

Although the margin distribution in practice does not always strictly satisfy the sufficient SNR condition under the general Gaussian setting, the empirically measured pairwise ranking error still shows that score fusion effectively reduces the ranking error probability.
\begin{table}[h]
  \centering
  \setlength{\tabcolsep}{3pt}
  \caption{Statistics on different datasets.}
  \label{abla:statistics}
  
  \begin{tabular}{lcccccccccc}
  \toprule
  Dataset 
  & $\bar{\gamma}_D$ 
  & $\bar{\gamma}_E$ 
  & $\bar{\gamma}_{D+E}$ 
  & $\rho_{\text{pair}}$ 
  & $P_{\mathrm{err}}^D$ 
  & $P_{\mathrm{err}}^E$ 
  & $P_{\mathrm{err}}^{D+E}$
  & $SNR_{D}$
  & $SNR_{E}$
  & $SNR_{D+E}$ \\
  \midrule
  
  ICEWS14s   & 7.17 & 5.78 & 12.95 & 0.723 & 0.0206 & 0.0283 & 0.0196 & 2.23 & 2.01 & 2.23 \\
  ICEWS18    & 7.45 & 5.45 & 12.90 & 0.754 & 0.0151 & 0.0186 & 0.0141 & 2.49 & 2.23 & 2.53 \\
  ICEWS05-15 & 8.51 & 5.24 & 13.75 & 0.724 & 0.0187 & 0.0280 & 0.0178 & 2.58 & 2.17 & 2.59 \\
  GDELT      & 6.28 & 3.89 & 10.17 & 0.750 & 0.0163 & 0.0203 & 0.0139 & 2.36 & 1.96 & 2.34 \\
  
  \bottomrule
  \end{tabular}
  \end{table}

\subsection{Statistical Analysis of Results}\label{appe:stat}

We report the standard deviation of evaluation metrics across three runs to assess the stability of our method, with all values reported in percentage form.
On ICEWS14s, the standard deviations of MRR, Hits@1, and Hits@10 are 0.11, 0.13, and 0.13, respectively. 
On ICEWS18, the corresponding values are 0.12, 0.13, and 0.18. 
On ICEWS05-15, they are 0.10, 0.11, and 0.03. 
On GDELT, they are 0.04, 0.04, and 0.01.
These results demonstrate that CHE-TKG achieves stable performance across different datasets.

\subsection{Relation Prediction}
In Eq.~\ref{eq:rel_pred}, relation prediction is incorporated as an auxiliary objective to improve entity prediction. We further report the relation prediction performance of CHE-TKG on the ICEWS14s and ICEWS18 datasets. 
The compared baselines include RE-NET \cite{jin2020recurrent}, Glean \cite{deng2020dynamic}, RE-GCN \cite{li2021temporal}, DACHA \cite{chen2021dacha}, TiRGN \cite{li2022tirgn}, EvoExplore \cite{zhang2022temporal}, GTRL \cite{tang2023gtrl}, DHyper \cite{tang2024dhyper}, DECRL \cite{chen2024decrl}, and HEDRA \cite{chenbeyond}, whose results are all taken from HEDRA \cite{chenbeyond}. 
As shown in the results, our method achieves the second-best performance in terms of MRR and Hits@1 on ICEWS14s, and consistently ranks second across all metrics on ICEWS18. It is worth noting that the current state-of-the-art methods are specifically designed for relation prediction, while our model is not tailored for this task. Nevertheless, CHE-TKG achieves competitive results, suggesting that it captures both entity and relational semantics effectively.

\begin{table}[t]
  \centering
  \small
  \setlength{\tabcolsep}{4pt}
  \caption{Performance comparison on ICEWS14s and ICEWS18. The best results are in \textbf{bold} and the second-best are \underline{underlined}.}
  \label{tab:icews}
  \begin{tabular}{l ccc ccc}
  \toprule
  \multirow{2}{*}{Approach} 
  & \multicolumn{3}{c}{ICEWS14s} 
  & \multicolumn{3}{c}{ICEWS18} \\
  \cmidrule(lr){2-4} \cmidrule(lr){5-7}
  & MRR & Hits@1 & Hits@10 
  & MRR & Hits@1 & Hits@10 \\
  \midrule
  
  RE-NET     & 38.53 & 22.53 & \underline{74.05} & 39.63 & 23.55 & \textbf{75.66} \\
  Glean      & 36.07 & 22.17 & 64.75 & 35.15 & 22.02 & 64.24 \\
  RE-GCN     & 40.85 & 28.15 & 68.53 & 40.68 & 27.01 & 69.51 \\
  DACHA      & 40.69 & 27.28 & 68.44 & 40.80 & 27.83 & 69.21 \\
  TiRGN      & 41.28 & 29.52 & 70.66 & 42.26 & 30.19 & 73.90 \\
  
  EvoExplore & 28.11 & 13.97 & 57.67 & 29.82 & 18.50 & 58.01 \\
  GTRL       & 38.57 & 27.36 & 66.35 & 38.43 & 27.48 & 67.82 \\
  DHyper     & 41.71 & 29.37 & 69.32 & 42.84 & 29.96 & 70.82 \\
  DECRL      & 42.90 & 30.49 & 70.01 & 43.36 & 30.64 & 71.12 \\
  HEDRA      & \textbf{47.86} & \textbf{35.28} & \textbf{75.65} & \textbf{46.77} & \textbf{33.66} & \underline{75.64} \\
  
  \midrule
  CHE-TKG    & \underline{45.07} & \underline{32.74} & 72.56 
             & \underline{45.29} & \underline{32.34} & \underline{73.92} \\
  \bottomrule
  \end{tabular}
  \end{table}

\subsection{Robustness Analysis}

\begin{figure}[h]
  \centering
  \subcaptionbox{ICEWS14s\label{fig:left_robust}}[0.48\linewidth]{%
    \includegraphics[width=\linewidth]{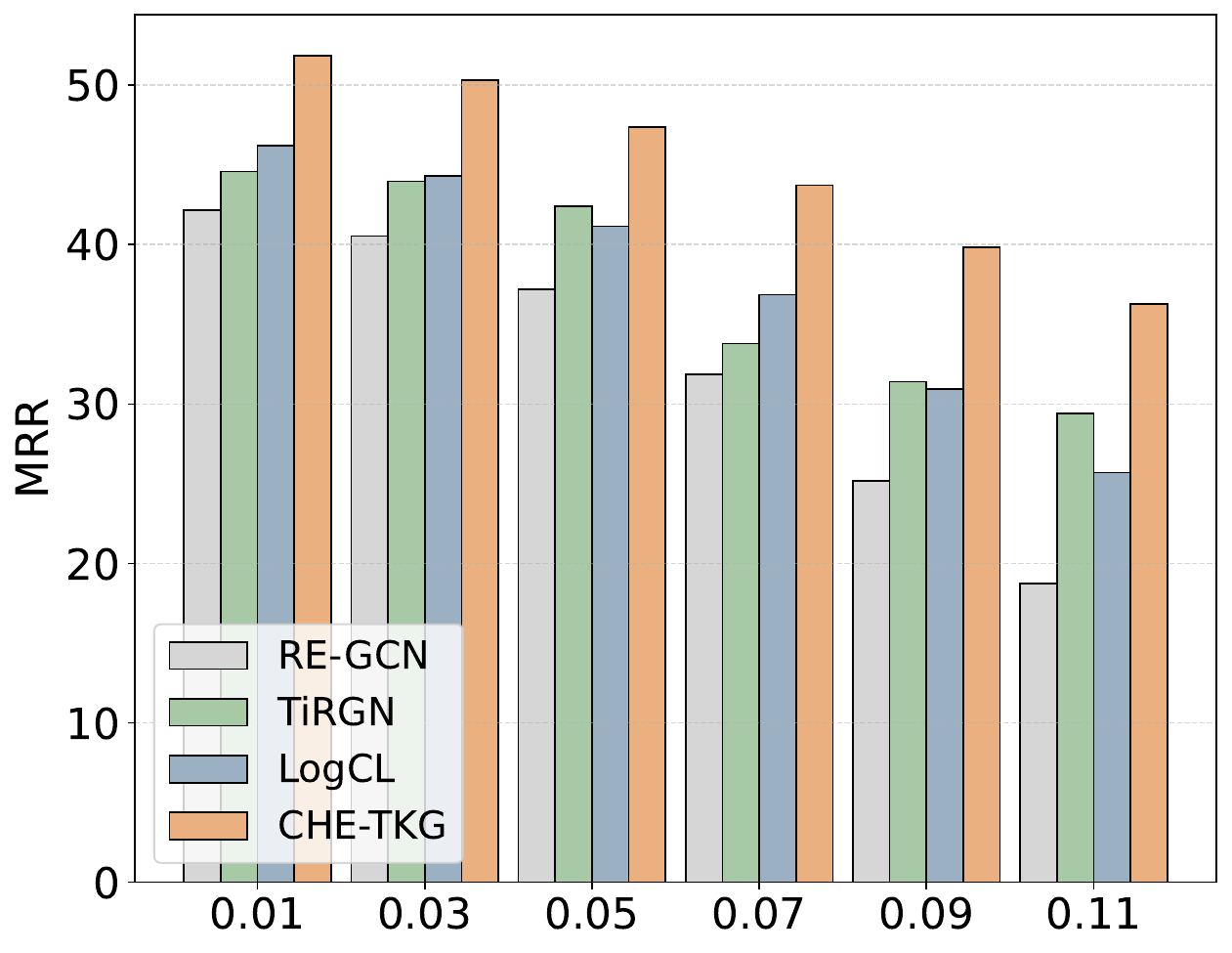}
  }\hspace{0.0\linewidth}
  \subcaptionbox{ICEWS18\label{fig:right_robust}}[0.48\linewidth]{%
    \includegraphics[width=\linewidth]{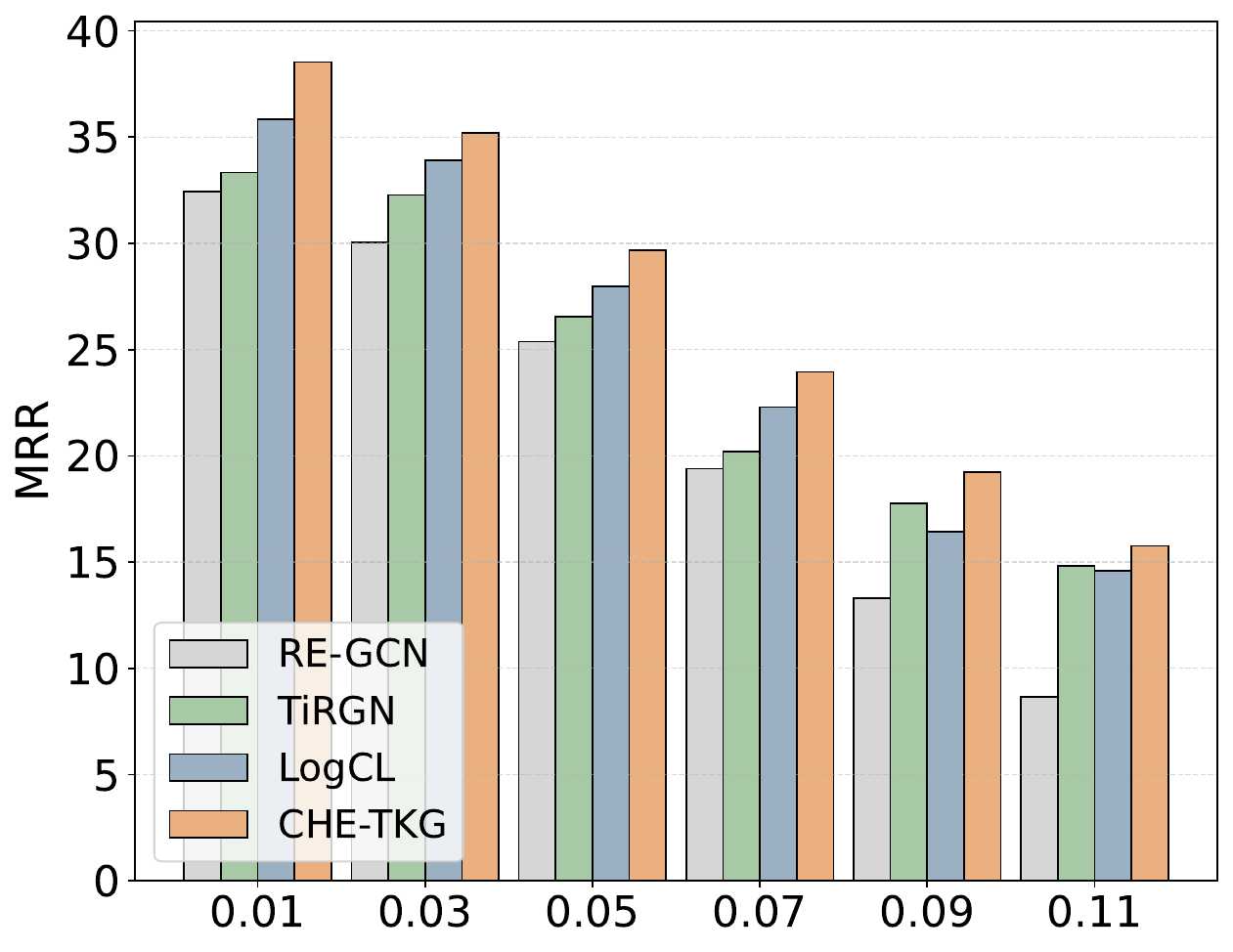}
  }
  \vspace{-5pt}
  \caption{Robustness analysis under Gaussian noise perturbations.}
  \label{fig:overall_robust}
\end{figure}

We evaluate robustness by injecting Gaussian noise with varying intensities into the initial entity embeddings at inference time. 
We compare CHE-TKG against strong open-source baselines, including RE-GCN\footnote{\url{https://github.com/Lee-zix/RE-GCN}}
\cite{li2021temporal}, TiRGN\footnote{\url{https://github.com/Liyyy2122/TiRGN}}
\cite{li2022tirgn}, and LogCL\footnote{\url{https://github.com/WeiChen3690/LogCL}}
\cite{chen2024local}.
 The results are shown in Fig.~\ref{fig:overall_robust}.
On ICEWS14s, CHE-TKG demonstrates the strongest robustness, exhibiting substantially smaller performance degradation as noise intensity increases. 
When the noise level reaches 0.11, CHE-TKG experiences an MRR drop of 30.04\%, compared to 55.54\% for RE-GCN and 44.37\% for LogCL, indicating greater resilience to perturbations in entity representations.
As analyzed in Section~\ref{sec:sensitive}, the evolutionary dynamics in ICEWS18 are relatively more stable, leading to smaller marginal robustness gains of CHE-TKG than those observed on ICEWS14s.

\subsection{Visualization Analysis}

\begin{figure*}[t]
  \centering
  
  \subcaptionbox{Entity}[0.49\textwidth]{%
    \includegraphics[width=\linewidth]{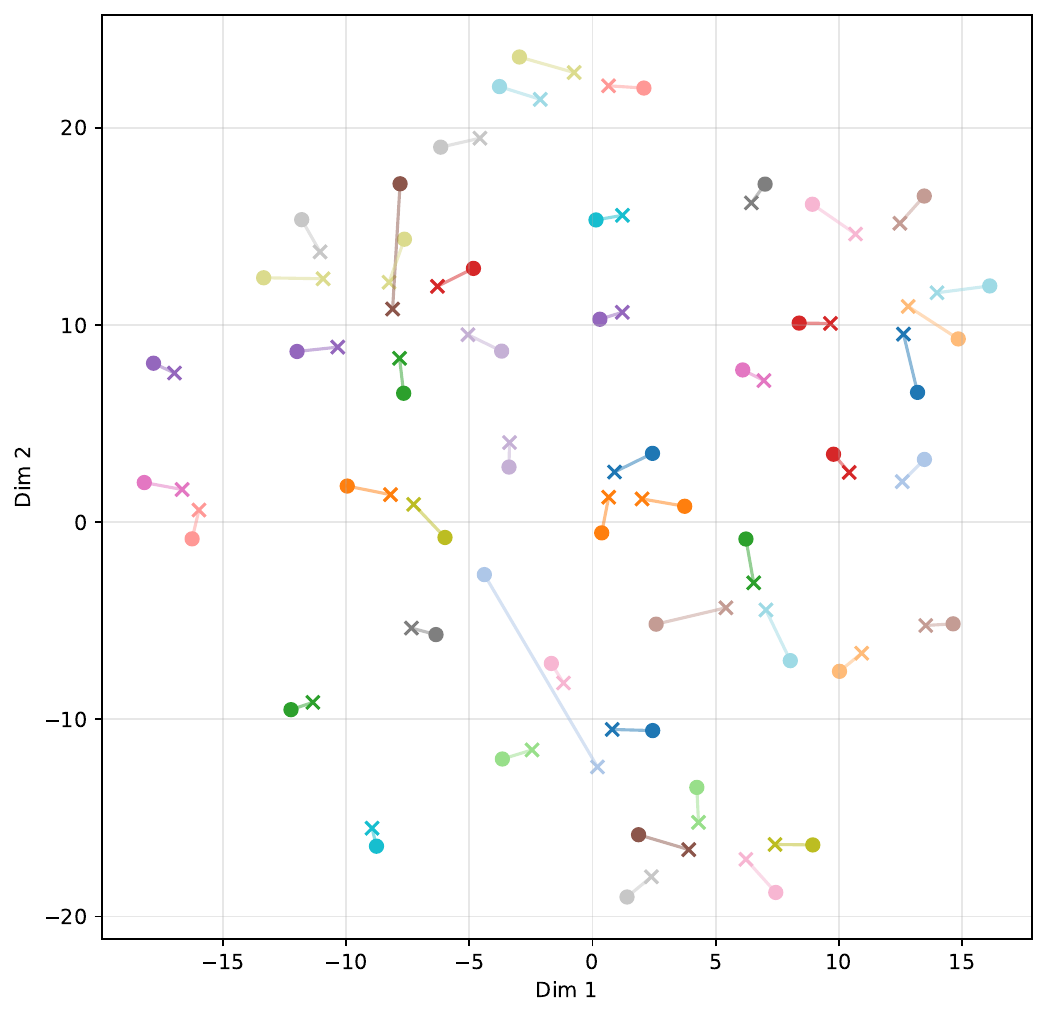}
  }
  \subcaptionbox{Entity}[0.49\textwidth]{%
    \includegraphics[width=\linewidth]{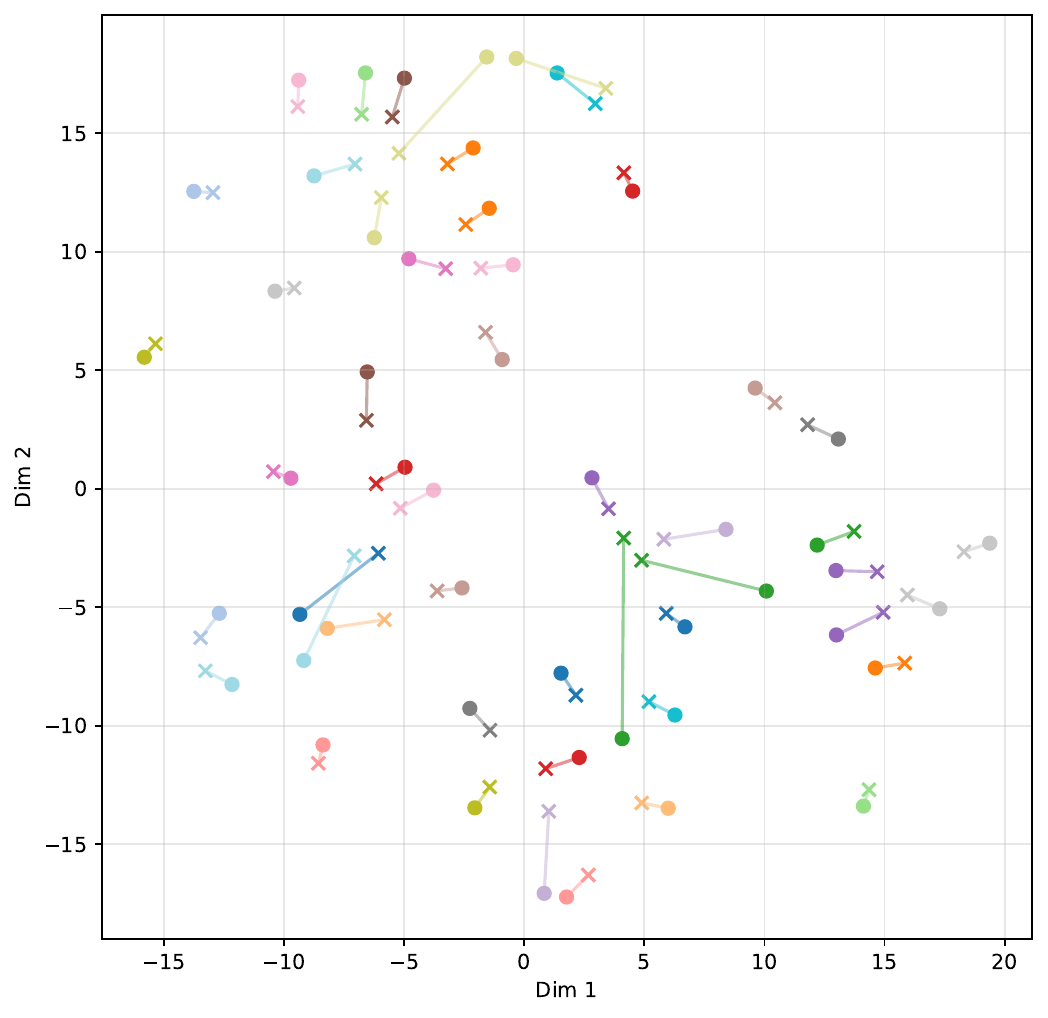}
  }

  \vspace{4pt}

  \subcaptionbox{Relation}[0.49\textwidth]{%
    \includegraphics[width=\linewidth]{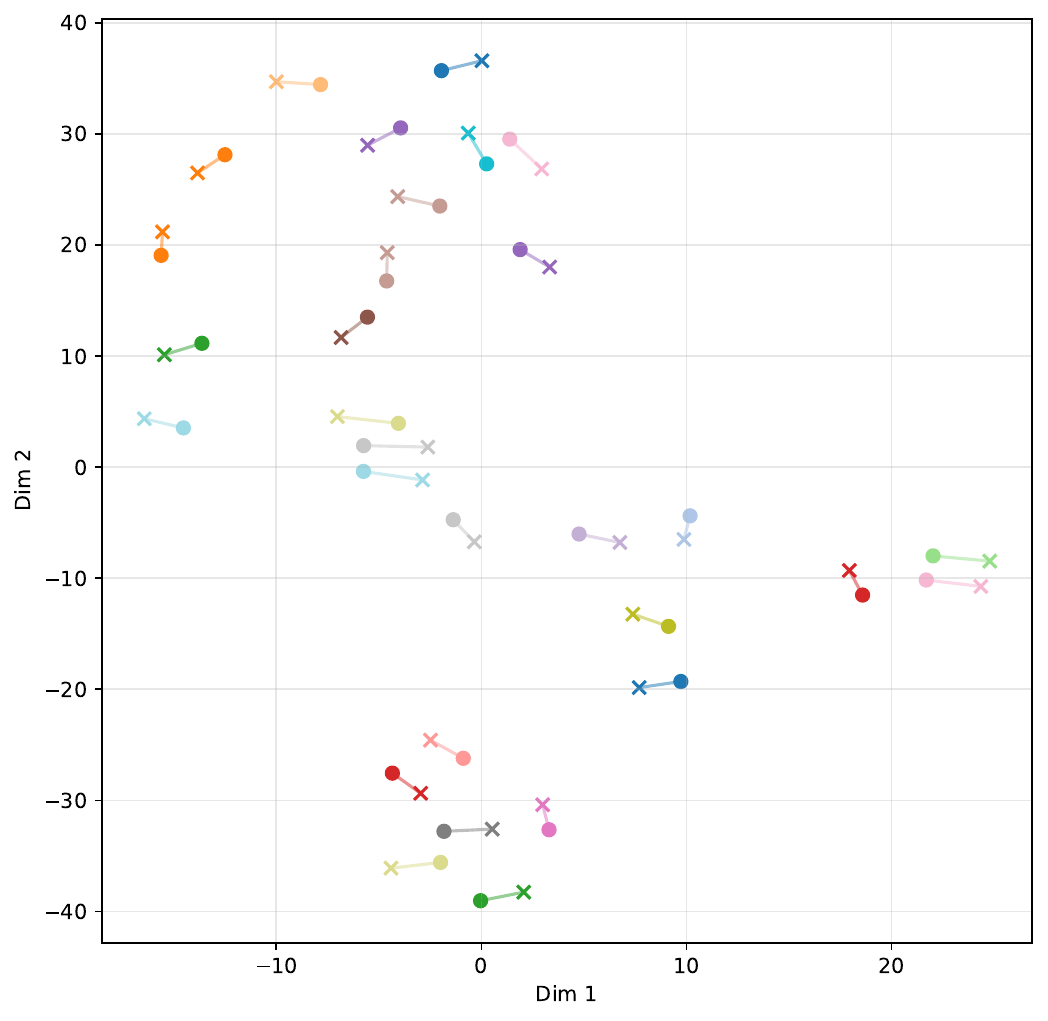}
  }
  \subcaptionbox{Relation}[0.49\textwidth]{%
    \includegraphics[width=\linewidth]{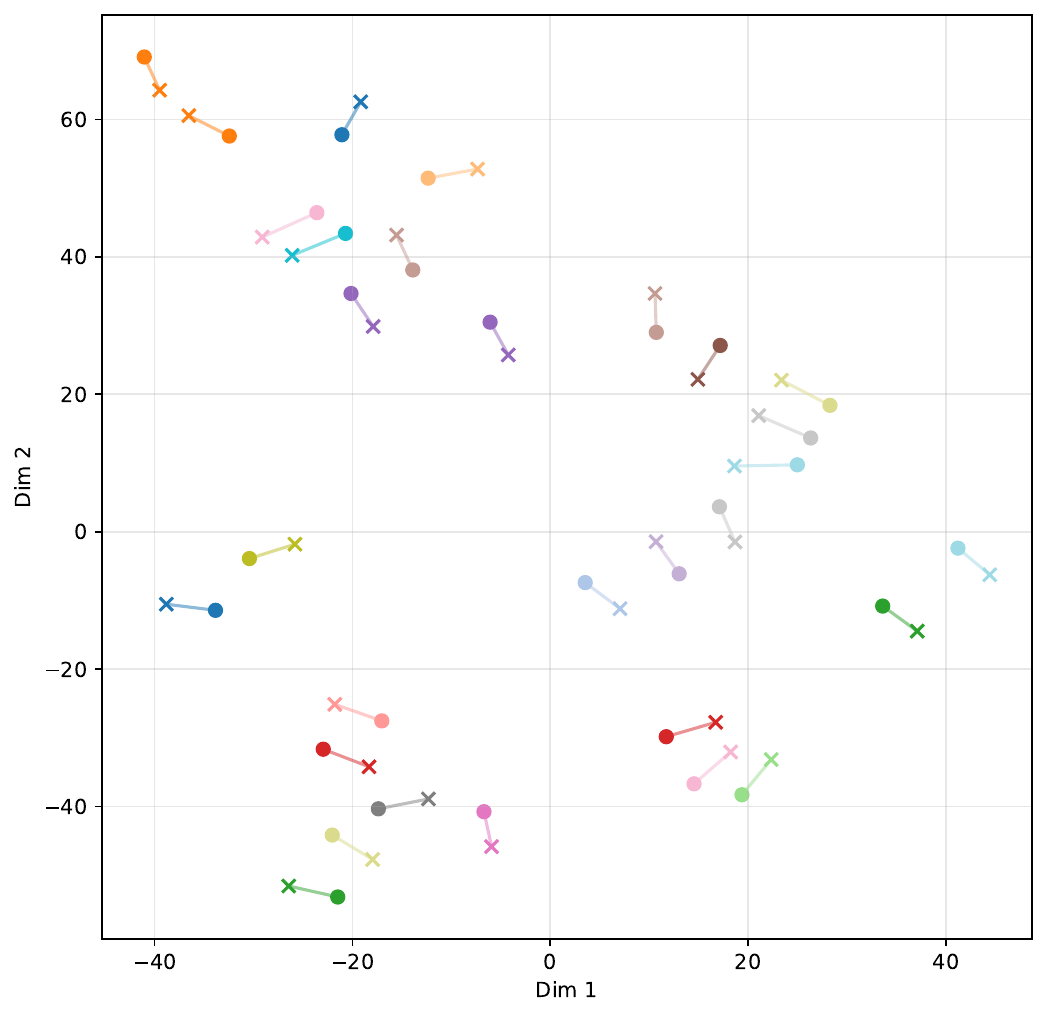}
  }

  \caption{t-SNE visualization of entity and relation embeddings under different settings.}
  \label{fig:tsne}
\end{figure*}

In this subsection, we present a visualization analysis of the entity and relation representations learned by the full framework, which incorporates both relation decomposition and contrastive alignment. 
The results are illustrated in Fig.~\ref{fig:tsne}.
Specifically, we apply t-SNE to visualize entity and relation embeddings from the historical evidence and evolutionary dynamics views on the ICEWS14s test set across different timestamps. We randomly select 50 entity pairs and 30 relation pairs for visualization. The markers “x” and “o” denote embeddings from the historical evidence view and the evolutionary dynamics view, respectively.
As shown in the figure, most embeddings exhibit noticeable yet moderate shifts across the two views, indicating that the representations are not redundant. At the same time, the overall spatial structures remain largely consistent, suggesting the presence of shared information.
These observations imply that the two views capture non-redundant yet correlated signals, which aligns with our assumption of conditional non-redundancy. Moreover, the structured shifts, rather than random displacements, indicate that the differences between the two views are meaningful rather than noise.

\begin{figure}[t]
  \centering
  
  \subcaptionbox{ICEWS14s\label{fig:icews14s}}[0.48\linewidth]{%
    \includegraphics[width=\linewidth]{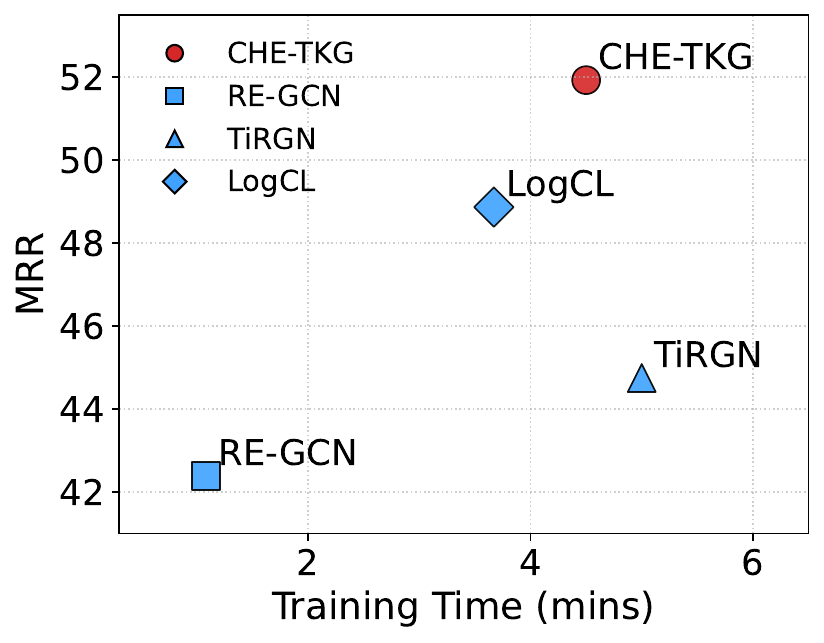}
  }\hspace{0.02\linewidth}
  \subcaptionbox{ICEWS18\label{fig:icews18}}[0.48\linewidth]{%
    \includegraphics[width=\linewidth]{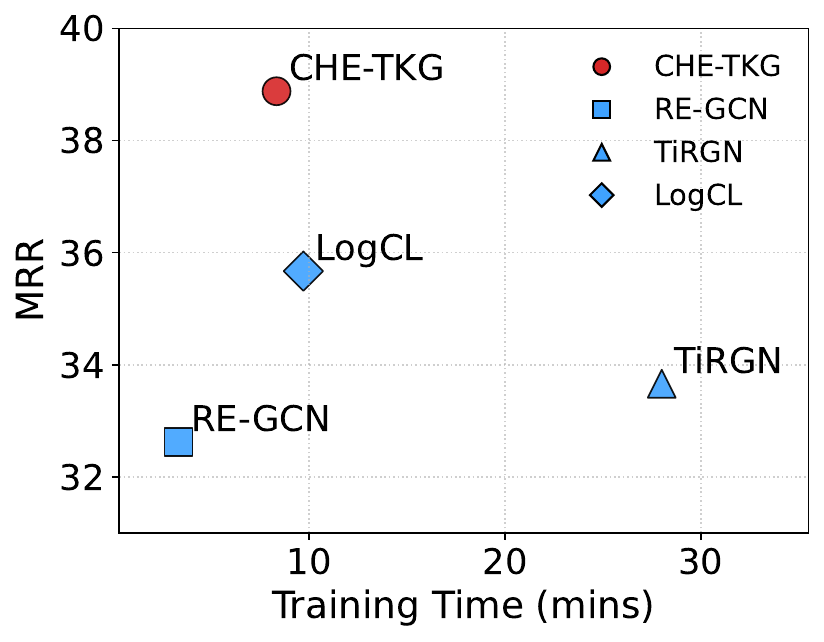}
  }
  
  \vspace{4pt}
  
  \subcaptionbox{ICEWS05-15\label{fig:icews0515}}[0.48\linewidth]{%
    \includegraphics[width=\linewidth]{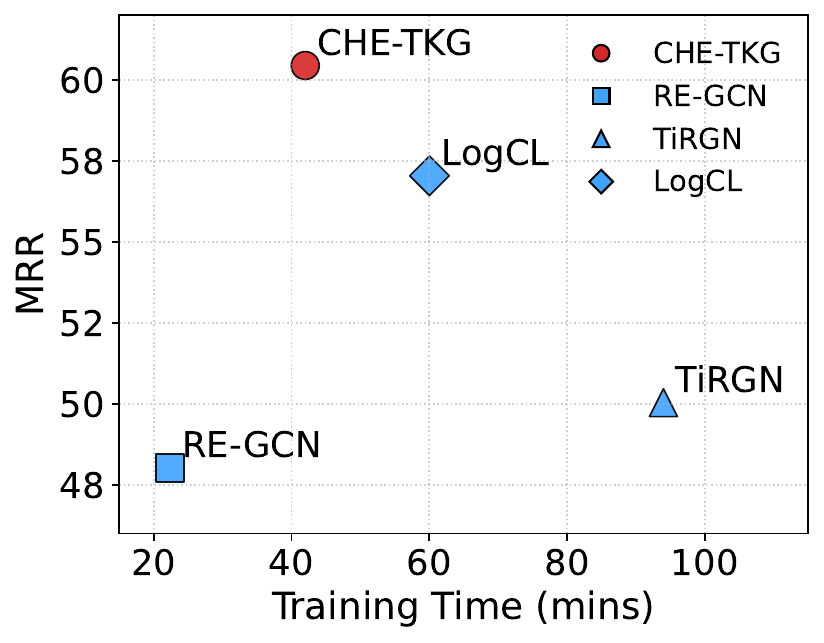}
  }\hspace{0.02\linewidth}
  \subcaptionbox{GDELT\label{fig:gdelt}}[0.48\linewidth]{%
    \includegraphics[width=\linewidth]{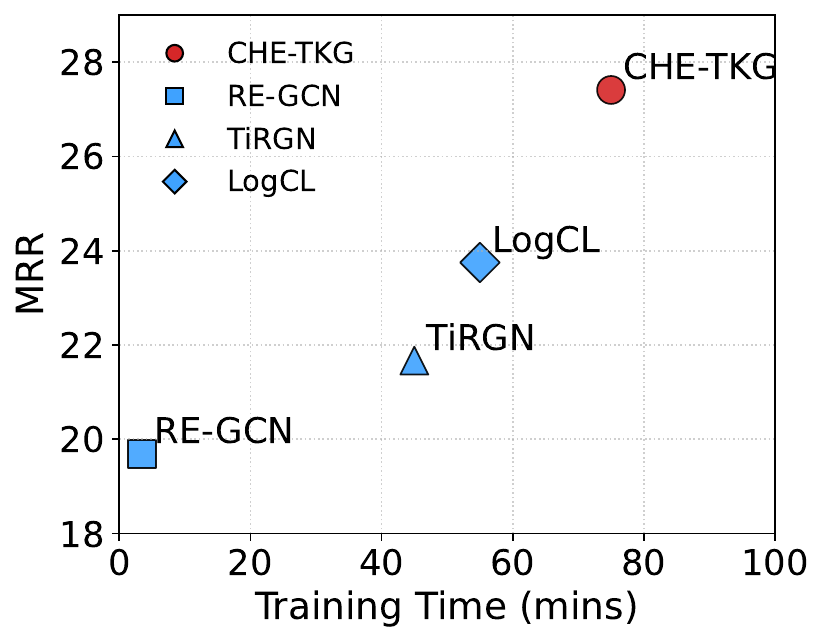}
  }

  \vspace{-5pt}
  \caption{Efficiency–performance trade-off comparisons.}
  \label{fig:overall_time}
\end{figure}

\subsection{Complexity Analysis}
We analyze the computational complexity of CHE-TKG from different components, omitting the embedding dimension $d$ for clarity.
For the historical evidence graph construction, the complexity is $O(|\mathcal{T}||\mathcal{F}_1|)$.
For the evolutionary dynamics graph construction, we adopt temporal logical rule learning based on TLogic, with worst-case complexity $O(|\mathcal{R}| n l D b)$, where $n$ is the number of sampled walks, $l$ is the rule length, $D$ is the maximum node degree, and $b$ is the number of sampled body groundings. Since $l$ is a small constant, this term can be simplified as $O(|\mathcal{R}| n D b)$.
For the temporal logical rule-based retrieval, the complexity is $O(\min(wk, N))$, which can be written as $O(N)$.
For the spatio-temporal initialization module, the complexity is $O(L (|\mathcal{E}| + |\mathcal{R}| + C_{gc}))$, where $C_{gc}$ denotes the cost of graph convolution.
For the relation decomposition module, since each relation is processed independently and the embedding dimension is omitted, the complexity is $O(|\mathcal{R}|)$.
For the collaborative learning stage, the HEGE and EDGE operate on query-specific subgraphs, yielding a complexity of $O(|\mathcal{E}| + |\mathcal{F}_1| + |\mathcal{F}_2|)$.
For the contrastive alignment, the complexity is $O(|\mathcal{Q}_{t_q}|)$.
Overall, the computational complexity of CHE-TKG is
\begin{equation}
  O(L (|\mathcal{E}| + |\mathcal{R}| + C_{gc}) + |\mathcal{R}| n D b + |\mathcal{T}||\mathcal{F}_1| + N + |\mathcal{E}| + |\mathcal{R}| + |\mathcal{F}_1| + |\mathcal{F}_2| + |\mathcal{Q}_{t_q}|).
\end{equation}

\begin{figure}[t]
  \centering
  \includegraphics[width=0.8\linewidth]{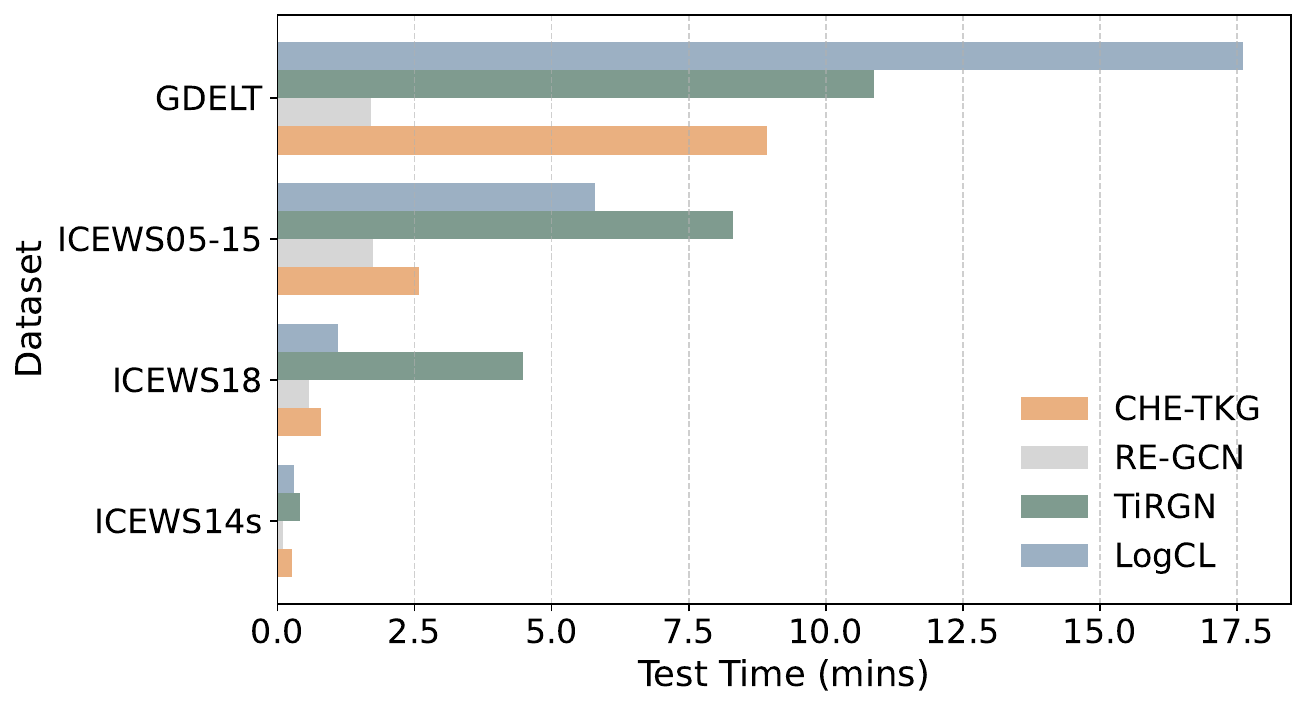}
  \caption{Test-time efficiency comparison.}
  \label{fig:test_time}
\end{figure}

\subsection{Execution Time Analysis}

We analyze the efficiency and predictive performance of CHE-TKG against strong open-source baselines, including RE-GCN \cite{li2021temporal}, TiRGN \cite{li2022tirgn}, and LogCL \cite{chen2024local}, as shown in Fig.~\ref{fig:overall_time}.
Here, training time is measured as the time per epoch.
On all datasets, CHE-TKG achieves a favorable balance between efficiency and performance. 
Compared with RE-GCN, CHE-TKG incurs moderately higher training time but consistently yields substantially better MRR. 
In contrast, more complex methods such as TiRGN and LogCL exhibit significantly higher training costs but underperform CHE-TKG.
These results demonstrate that CHE-TKG provides an effective trade-off between computational efficiency and extrapolation accuracy. 
Despite encoding both historical evidence and evolutionary dynamics graphs, efficient spatio-temporal initialization with a limited number of recent snapshots enables CHE-TKG to achieve strong performance without excessive overhead.
On the GDELT dataset, our method incurs slightly higher training time than all baselines, likely due to the use of GAT.
In addition, CHE-TKG incurs only slightly higher testing time than the simple baseline RE-GCN, but is more efficient than TiRGN and LogCL across ICEWS14s, ICEWS18, ICEWS05-15, and GDELT, as shown in Fig.~\ref{fig:test_time}.

\section{Additional Details on Initialization}
Following prior work~\cite{li2022hismatch,zhang2025historically,chen2024local}, we incorporate temporal encoding in the initialization stage to model temporal evolution and periodic patterns.

Given the relative timestamp $\tau = t_q - t_i$, we compute a cosine-based temporal encoding:
\begin{equation}
\phi(\tau)=\cos\!\left(w_\tau \tau + b_\tau\right),
\end{equation}
where $w_\tau$ and $b_\tau$ are learnable parameters.

The temporal encoding is concatenated with the entity embedding and projected as:
\begin{equation}
\mathbf{e}_t = W_0 \bigl[ \mathbf{e}_i \,\Vert\, \phi(\tau) \bigr],
\end{equation}
where $[\cdot\Vert\cdot]$ denotes concatenation and $W_0 \in \mathbb{R}^{d \times 2d}$ is a linear projection matrix.

The resulting representation $\mathbf{e}_t$ is then used as the input to the GNN in the initialization stage.

\section{Limitations and Broader Impacts}
\subsection{Limitations}\label{appe:limita}
Although our method improves entity extrapolation performance on benchmark datasets by explicitly modeling historical evidence and evolutionary dynamics, it still has several limitations. In particular, it remains unclear which neighborhoods in the historical evidence graph and the evolutionary dynamics graph contribute most to TKGR, limiting the interpretability of CHE-TKG and calling for further investigation.
\subsection{Broader Impacts}\label{appe:impact}
CHE-TKG provides a framework for predicting future facts in temporal knowledge graphs, which can support future-oriented reasoning in applications such as event forecasting, recommendation, risk monitoring, and decision support. 
By jointly modeling historical evidence and evolutionary dynamics, it may help identify plausible future links and provide useful signals for downstream analysis.
However, predictions from CHE-TKG should be used with caution, especially in high-stakes domains. 
Since the model relies on historical data, it may inherit biases, noise, or incompleteness from the observed knowledge graph. 
Potential misuse, such as inferring sensitive relationships or supporting surveillance-oriented applications, should also be considered. 
Therefore, CHE-TKG should be used as a decision-support tool with appropriate human oversight, data governance, and ethical safeguards.

\newpage
\section*{NeurIPS Paper Checklist}

\begin{enumerate}

\item {\bf Claims}
    \item[] Question: Do the main claims made in the abstract and introduction accurately reflect the paper's contributions and scope?
    \item[] Answer: \answerYes{} 
    \item[] Justification: The abstract and introduction include claims that accurately reflect the paper's contributions and scope.
    \item[] Guidelines:
    \begin{itemize}
        \item The answer \answerNA{} means that the abstract and introduction do not include the claims made in the paper.
        \item The abstract and/or introduction should clearly state the claims made, including the contributions made in the paper and important assumptions and limitations. A \answerNo{} or \answerNA{} answer to this question will not be perceived well by the reviewers. 
        \item The claims made should match theoretical and experimental results, and reflect how much the results can be expected to generalize to other settings. 
        \item It is fine to include aspirational goals as motivation as long as it is clear that these goals are not attained by the paper. 
    \end{itemize}

\item {\bf Limitations}
    \item[] Question: Does the paper discuss the limitations of the work performed by the authors?
    \item[] Answer: \answerYes{} 
    \item[] Justification: The limitations of this work are discussed in Appendix~\ref{appe:limita}.
    \item[] Guidelines:
    \begin{itemize}
        \item The answer \answerNA{} means that the paper has no limitation while the answer \answerNo{} means that the paper has limitations, but those are not discussed in the paper. 
        \item The authors are encouraged to create a separate ``Limitations'' section in their paper.
        \item The paper should point out any strong assumptions and how robust the results are to violations of these assumptions (e.g., independence assumptions, noiseless settings, model well-specification, asymptotic approximations only holding locally). The authors should reflect on how these assumptions might be violated in practice and what the implications would be.
        \item The authors should reflect on the scope of the claims made, e.g., if the approach was only tested on a few datasets or with a few runs. In general, empirical results often depend on implicit assumptions, which should be articulated.
        \item The authors should reflect on the factors that influence the performance of the approach. For example, a facial recognition algorithm may perform poorly when image resolution is low or images are taken in low lighting. Or a speech-to-text system might not be used reliably to provide closed captions for online lectures because it fails to handle technical jargon.
        \item The authors should discuss the computational efficiency of the proposed algorithms and how they scale with dataset size.
        \item If applicable, the authors should discuss possible limitations of their approach to address problems of privacy and fairness.
        \item While the authors might fear that complete honesty about limitations might be used by reviewers as grounds for rejection, a worse outcome might be that reviewers discover limitations that aren't acknowledged in the paper. The authors should use their best judgment and recognize that individual actions in favor of transparency play an important role in developing norms that preserve the integrity of the community. Reviewers will be specifically instructed to not penalize honesty concerning limitations.
    \end{itemize}

\item {\bf Theory assumptions and proofs}
    \item[] Question: For each theoretical result, does the paper provide the full set of assumptions and a complete (and correct) proof?
    \item[] Answer: \answerYes{} 
    \item[] Justification: Theorem~\ref{theo:error} provides the full assumptions and correct derivations, with the proof included in Appendix~\ref{proof}. We also empirically verify the assumptions in Section~\ref{exp:signal}.
    \item[] Guidelines:
    \begin{itemize}
        \item The answer \answerNA{} means that the paper does not include theoretical results. 
        \item All the theorems, formulas, and proofs in the paper should be numbered and cross-referenced.
        \item All assumptions should be clearly stated or referenced in the statement of any theorems.
        \item The proofs can either appear in the main paper or the supplemental material, but if they appear in the supplemental material, the authors are encouraged to provide a short proof sketch to provide intuition. 
        \item Inversely, any informal proof provided in the core of the paper should be complemented by formal proofs provided in appendix or supplemental material.
        \item Theorems and Lemmas that the proof relies upon should be properly referenced. 
    \end{itemize}

    \item {\bf Experimental result reproducibility}
    \item[] Question: Does the paper fully disclose all the information needed to reproduce the main experimental results of the paper to the extent that it affects the main claims and/or conclusions of the paper (regardless of whether the code and data are provided or not)?
    \item[] Answer: \answerYes{} 
    \item[] Justification: The paper provides all necessary information to reproduce the main experimental results.
    \item[] Guidelines:
    \begin{itemize}
        \item The answer \answerNA{} means that the paper does not include experiments.
        \item If the paper includes experiments, a \answerNo{} answer to this question will not be perceived well by the reviewers: Making the paper reproducible is important, regardless of whether the code and data are provided or not.
        \item If the contribution is a dataset and\slash or model, the authors should describe the steps taken to make their results reproducible or verifiable. 
        \item Depending on the contribution, reproducibility can be accomplished in various ways. For example, if the contribution is a novel architecture, describing the architecture fully might suffice, or if the contribution is a specific model and empirical evaluation, it may be necessary to either make it possible for others to replicate the model with the same dataset, or provide access to the model. In general. releasing code and data is often one good way to accomplish this, but reproducibility can also be provided via detailed instructions for how to replicate the results, access to a hosted model (e.g., in the case of a large language model), releasing of a model checkpoint, or other means that are appropriate to the research performed.
        \item While NeurIPS does not require releasing code, the conference does require all submissions to provide some reasonable avenue for reproducibility, which may depend on the nature of the contribution. For example
        \begin{enumerate}
            \item If the contribution is primarily a new algorithm, the paper should make it clear how to reproduce that algorithm.
            \item If the contribution is primarily a new model architecture, the paper should describe the architecture clearly and fully.
            \item If the contribution is a new model (e.g., a large language model), then there should either be a way to access this model for reproducing the results or a way to reproduce the model (e.g., with an open-source dataset or instructions for how to construct the dataset).
            \item We recognize that reproducibility may be tricky in some cases, in which case authors are welcome to describe the particular way they provide for reproducibility. In the case of closed-source models, it may be that access to the model is limited in some way (e.g., to registered users), but it should be possible for other researchers to have some path to reproducing or verifying the results.
        \end{enumerate}
    \end{itemize}

\item {\bf Open access to data and code}
    \item[] Question: Does the paper provide open access to the data and code, with sufficient instructions to faithfully reproduce the main experimental results, as described in supplemental material?
    \item[] Answer: \answerYes{} 
    \item[] Justification: The paper provides anonymized code, including training and evaluation code, environment setup, execution commands, and the code, environment, and commands used for graph construction.
    \item[] Guidelines:
    \begin{itemize}
        \item The answer \answerNA{} means that paper does not include experiments requiring code.
        \item Please see the NeurIPS code and data submission guidelines (\url{https://neurips.cc/public/guides/CodeSubmissionPolicy}) for more details.
        \item While we encourage the release of code and data, we understand that this might not be possible, so \answerNo{} is an acceptable answer. Papers cannot be rejected simply for not including code, unless this is central to the contribution (e.g., for a new open-source benchmark).
        \item The instructions should contain the exact command and environment needed to run to reproduce the results. See the NeurIPS code and data submission guidelines (\url{https://neurips.cc/public/guides/CodeSubmissionPolicy}) for more details.
        \item The authors should provide instructions on data access and preparation, including how to access the raw data, preprocessed data, intermediate data, and generated data, etc.
        \item The authors should provide scripts to reproduce all experimental results for the new proposed method and baselines. If only a subset of experiments are reproducible, they should state which ones are omitted from the script and why.
        \item At submission time, to preserve anonymity, the authors should release anonymized versions (if applicable).
        \item Providing as much information as possible in supplemental material (appended to the paper) is recommended, but including URLs to data and code is permitted.
    \end{itemize}

\item {\bf Experimental setting/details}
    \item[] Question: Does the paper specify all the training and test details (e.g., data splits, hyperparameters, how they were chosen, type of optimizer) necessary to understand the results?
    \item[] Answer: \answerYes{} 
    \item[] Justification: The paper provides the training and test details in Appendix~\ref{appe:setting}.
    \item[] Guidelines:
    \begin{itemize}
        \item The answer \answerNA{} means that the paper does not include experiments.
        \item The experimental setting should be presented in the core of the paper to a level of detail that is necessary to appreciate the results and make sense of them.
        \item The full details can be provided either with the code, in appendix, or as supplemental material.
    \end{itemize}

\item {\bf Experiment statistical significance}
    \item[] Question: Does the paper report error bars suitably and correctly defined or other appropriate information about the statistical significance of the experiments?
    \item[] Answer: \answerYes{} 
    \item[] Justification: The paper provides information about the statistical significance of the experiments in Appendix~\ref{appe:stat}.
    \item[] Guidelines:
    \begin{itemize}
        \item The answer \answerNA{} means that the paper does not include experiments.
        \item The authors should answer \answerYes{} if the results are accompanied by error bars, confidence intervals, or statistical significance tests, at least for the experiments that support the main claims of the paper.
        \item The factors of variability that the error bars are capturing should be clearly stated (for example, train/test split, initialization, random drawing of some parameter, or overall run with given experimental conditions).
        \item The method for calculating the error bars should be explained (closed form formula, call to a library function, bootstrap, etc.)
        \item The assumptions made should be given (e.g., Normally distributed errors).
        \item It should be clear whether the error bar is the standard deviation or the standard error of the mean.
        \item It is OK to report 1-sigma error bars, but one should state it. The authors should preferably report a 2-sigma error bar than state that they have a 96\% CI, if the hypothesis of Normality of errors is not verified.
        \item For asymmetric distributions, the authors should be careful not to show in tables or figures symmetric error bars that would yield results that are out of range (e.g., negative error rates).
        \item If error bars are reported in tables or plots, the authors should explain in the text how they were calculated and reference the corresponding figures or tables in the text.
    \end{itemize}

\item {\bf Experiments compute resources}
    \item[] Question: For each experiment, does the paper provide sufficient information on the computer resources (type of compute workers, memory, time of execution) needed to reproduce the experiments?
    \item[] Answer: \answerYes{} 
    \item[] Justification: The paper provides the compute resources used for the experiments in Appendix~\ref{appe:setting}.
    \item[] Guidelines:
    \begin{itemize}
        \item The answer \answerNA{} means that the paper does not include experiments.
        \item The paper should indicate the type of compute workers CPU or GPU, internal cluster, or cloud provider, including relevant memory and storage.
        \item The paper should provide the amount of compute required for each of the individual experimental runs as well as estimate the total compute. 
        \item The paper should disclose whether the full research project required more compute than the experiments reported in the paper (e.g., preliminary or failed experiments that didn't make it into the paper). 
    \end{itemize}
    
\item {\bf Code of ethics}
    \item[] Question: Does the research conducted in the paper conform, in every respect, with the NeurIPS Code of Ethics \url{https://neurips.cc/public/EthicsGuidelines}?
    \item[] Answer: \answerYes{} 
    \item[] Justification: The research conducted in this paper conforms to the NeurIPS Code of Ethics in all aspects, including data collection, experimental design, and result dissemination, and involves no ethical violations or risks.
    \item[] Guidelines:
    \begin{itemize}
        \item The answer \answerNA{} means that the authors have not reviewed the NeurIPS Code of Ethics.
        \item If the authors answer \answerNo, they should explain the special circumstances that require a deviation from the Code of Ethics.
        \item The authors should make sure to preserve anonymity (e.g., if there is a special consideration due to laws or regulations in their jurisdiction).
    \end{itemize}

\item {\bf Broader impacts}
    \item[] Question: Does the paper discuss both potential positive societal impacts and negative societal impacts of the work performed?
    \item[] Answer: \answerYes{} 
    \item[] Justification: The paper discusses the potential societal impacts of this work in Appendix~\ref{appe:impact}.
    \item[] Guidelines:
    \begin{itemize}
        \item The answer \answerNA{} means that there is no societal impact of the work performed.
        \item If the authors answer \answerNA{} or \answerNo, they should explain why their work has no societal impact or why the paper does not address societal impact.
        \item Examples of negative societal impacts include potential malicious or unintended uses (e.g., disinformation, generating fake profiles, surveillance), fairness considerations (e.g., deployment of technologies that could make decisions that unfairly impact specific groups), privacy considerations, and security considerations.
        \item The conference expects that many papers will be foundational research and not tied to particular applications, let alone deployments. However, if there is a direct path to any negative applications, the authors should point it out. For example, it is legitimate to point out that an improvement in the quality of generative models could be used to generate Deepfakes for disinformation. On the other hand, it is not needed to point out that a generic algorithm for optimizing neural networks could enable people to train models that generate Deepfakes faster.
        \item The authors should consider possible harms that could arise when the technology is being used as intended and functioning correctly, harms that could arise when the technology is being used as intended but gives incorrect results, and harms following from (intentional or unintentional) misuse of the technology.
        \item If there are negative societal impacts, the authors could also discuss possible mitigation strategies (e.g., gated release of models, providing defenses in addition to attacks, mechanisms for monitoring misuse, mechanisms to monitor how a system learns from feedback over time, improving the efficiency and accessibility of ML).
    \end{itemize}
    
\item {\bf Safeguards}
    \item[] Question: Does the paper describe safeguards that have been put in place for responsible release of data or models that have a high risk for misuse (e.g., pre-trained language models, image generators, or scraped datasets)?
    \item[] Answer: \answerNA{} 
    \item[] Justification: The paper does not involve the release of data or models with a high risk for misuse.
    \item[] Guidelines:
    \begin{itemize}
        \item The answer \answerNA{} means that the paper poses no such risks.
        \item Released models that have a high risk for misuse or dual-use should be released with necessary safeguards to allow for controlled use of the model, for example by requiring that users adhere to usage guidelines or restrictions to access the model or implementing safety filters. 
        \item Datasets that have been scraped from the Internet could pose safety risks. The authors should describe how they avoided releasing unsafe images.
        \item We recognize that providing effective safeguards is challenging, and many papers do not require this, but we encourage authors to take this into account and make a best faith effort.
    \end{itemize}

\item {\bf Licenses for existing assets}
    \item[] Question: Are the creators or original owners of assets (e.g., code, data, models), used in the paper, properly credited and are the license and terms of use explicitly mentioned and properly respected?
    \item[] Answer: \answerYes{} 
    \item[] Justification: The paper properly cites the existing datasets and open-source code used in this work and respects their licenses and terms of use.
    \item[] Guidelines:
    \begin{itemize}
        \item The answer \answerNA{} means that the paper does not use existing assets.
        \item The authors should cite the original paper that produced the code package or dataset.
        \item The authors should state which version of the asset is used and, if possible, include a URL.
        \item The name of the license (e.g., CC-BY 4.0) should be included for each asset.
        \item For scraped data from a particular source (e.g., website), the copyright and terms of service of that source should be provided.
        \item If assets are released, the license, copyright information, and terms of use in the package should be provided. For popular datasets, \url{paperswithcode.com/datasets} has curated licenses for some datasets. Their licensing guide can help determine the license of a dataset.
        \item For existing datasets that are re-packaged, both the original license and the license of the derived asset (if it has changed) should be provided.
        \item If this information is not available online, the authors are encouraged to reach out to the asset's creators.
    \end{itemize}

\item {\bf New assets}
    \item[] Question: Are new assets introduced in the paper well documented and is the documentation provided alongside the assets?
    \item[] Answer: \answerYes{}     
    \item[] Justification: The paper provides anonymized code for training, and no new dataset is released.
    \item[] Guidelines:
    \begin{itemize}
        \item The answer \answerNA{} means that the paper does not release new assets.
        \item Researchers should communicate the details of the dataset\slash code\slash model as part of their submissions via structured templates. This includes details about training, license, limitations, etc. 
        \item The paper should discuss whether and how consent was obtained from people whose asset is used.
        \item At submission time, remember to anonymize your assets (if applicable). You can either create an anonymized URL or include an anonymized zip file.
    \end{itemize}

\item {\bf Crowdsourcing and research with human subjects}
    \item[] Question: For crowdsourcing experiments and research with human subjects, does the paper include the full text of instructions given to participants and screenshots, if applicable, as well as details about compensation (if any)? 
    \item[] Answer: \answerNA{} 
    \item[] Justification: The paper does not involve crowdsourcing or research with human subjects.
    \item[] Guidelines:
    \begin{itemize}
        \item The answer \answerNA{} means that the paper does not involve crowdsourcing nor research with human subjects.
        \item Including this information in the supplemental material is fine, but if the main contribution of the paper involves human subjects, then as much detail as possible should be included in the main paper. 
        \item According to the NeurIPS Code of Ethics, workers involved in data collection, curation, or other labor should be paid at least the minimum wage in the country of the data collector. 
    \end{itemize}

\item {\bf Institutional review board (IRB) approvals or equivalent for research with human subjects}
    \item[] Question: Does the paper describe potential risks incurred by study participants, whether such risks were disclosed to the subjects, and whether Institutional Review Board (IRB) approvals (or an equivalent approval/review based on the requirements of your country or institution) were obtained?
    \item[] Answer: \answerNA{} 
    \item[] Justification: The paper does not involve crowdsourcing or research with human subjects.
    \item[] Guidelines:
    \begin{itemize}
        \item The answer \answerNA{} means that the paper does not involve crowdsourcing nor research with human subjects.
        \item Depending on the country in which research is conducted, IRB approval (or equivalent) may be required for any human subjects research. If you obtained IRB approval, you should clearly state this in the paper. 
        \item We recognize that the procedures for this may vary significantly between institutions and locations, and we expect authors to adhere to the NeurIPS Code of Ethics and the guidelines for their institution. 
        \item For initial submissions, do not include any information that would break anonymity (if applicable), such as the institution conducting the review.
    \end{itemize}

\item {\bf Declaration of LLM usage}
    \item[] Question: Does the paper describe the usage of LLMs if it is an important, original, or non-standard component of the core methods in this research? Note that if the LLM is used only for writing, editing, or formatting purposes and does \emph{not} impact the core methodology, scientific rigor, or originality of the research, declaration is not required.
    \item[] Answer: \answerNA{}
    \item[] Justification: The core method development in this research does not involve LLMs as important, original, or non-standard components.
    \item[] Guidelines:
    \begin{itemize}
        \item The answer \answerNA{} means that the core method development in this research does not involve LLMs as any important, original, or non-standard components.
        \item Please refer to our LLM policy in the NeurIPS handbook for what should or should not be described.
    \end{itemize}

\end{enumerate}

\end{document}